\newtheorem{lemma}{Lemma}
\newtheorem{remark}{Remark}
\newtheorem{theorem}{Theorem}
\newtheorem{assumption}{Assumption}
\newenvironment{hproof}{\proof}{\endproof}
\newenvironment{tproof}{\proof}{\endproof}
\DeclareMathOperator{\Tr}{Tr}
\DeclareMathOperator*{\argmax}{arg\,max}
\DeclarePairedDelimiter{\abs}{\lvert}{\rvert}
\DeclarePairedDelimiter{\norm}{\lVert}{\rVert}
\DeclarePairedDelimiter{\ceil}{\lceil}{\rceil}
\title{Delayed Feedback in Generalised Linear Bandits Revisited}
\author{Benjamin Howson \and Ciara Pike-Burke \and Sarah Filippi}
\date{
    Department of Mathematics, Imperial College London\\[2ex]
    \today
}
\begin{document}
\maketitle
\begin{abstract}
The stochastic generalised linear bandit is a well-understood model for sequential decision-making problems, with many algorithms achieving near-optimal regret guarantees under immediate feedback. However, the stringent requirement for immediate rewards is unmet in many real-world applications where the reward is almost always delayed. We study the phenomenon of delayed rewards in generalised linear bandits in a theoretical manner. We show that a natural adaptation of an optimistic algorithm to the delayed feedback achieves a regret bound where the penalty for the delays is independent of the horizon. This result significantly improves upon existing work, where the best known regret bound has the delay penalty increasing with the horizon. We verify our theoretical results through experiments on simulated data. 
\end{abstract}

\section{Introduction}\label{section: introduction}
Recently, bandit algorithms have found application in areas from dynamic pricing and healthcare to finance and recommender systems with great success \citep{Misra2019, Durand2018a, Shen2015, Spotify2018}. There are many formulations of bandit problems. One of these is the stochastic generalised linear bandit, which captures a wide class of problems, such as when the rewards are counts, binary values or can take any real-valued number. The generalised linear bandit problem proceeds in rounds, where in each round, a learner must choose from a set of possible actions. After selecting an action, the learner receives feedback from the environment in the form of a reward which stochastically depends on the inner product of the action and some unknown parameter vector. The goal of the learner is to maximise their expected cumulative reward.\\

There are many provably efficient algorithms for the generalised linear bandit \citep{Filippi2010, Abbasi-Yadkori2011, Li2017, Faury20a}. Unfortunately, these existing algorithms require immediate feedback from the environment. This strict requirement for immediate rewards often goes unmet in practice. For example, in many recommender systems, the user must provide feedback to the learner while operating on a very different time scales; e.g. the learner can make thousands of recommendations per second, whereas a user may take several minutes to a couple of days to respond to the recommendation, if at all \citep{Chapelle2014}. Alternatively, practitioners might want to optimise for a longer-term measure of success \citep{Lyft2021}, in which case the reward is not observable or even defined immediately. Delayed feedback also arises in clinical trials due to the time-consuming task of obtaining medical feedback and because patients do not respond to their prescribed treatment immediately.\\

In all the above settings, the reward for any given action returns at an unknown time in the future. Meanwhile, the learner must continue operating in the environment without feedback from many of their past choices. A natural model for this phenomenon is to introduce a random delay between taking action and receiving the reward. However, the delays pose significant theoretical challenges because standard tools for analysing bandit algorithms rely on utilising immediate feedback to reduce the uncertainty in the learner's estimation. Under delayed feedback, it is unclear how long the learner will have to wait before they gain information about the quality of an action, which hinders their future decision-making abilities.\\ 

These challenges have led to the development of algorithms specifically for delayed feedback in generalised linear bandits. However, to the best of our knowledge, these existing algorithms require \textit{a-priori} knowledge of the expected delay (along with other quantities), strong assumptions on the delay distribution, restrictive assumptions on the action sets, or any combination thereof. Moreover, the best regret bound achievable by these algorithms is $\widetilde{\mathcal{O}}(\sqrt{dT}\sqrt{d + \mathbb{E}[\tau]})$ where $T$ is the number of rounds, $d$ is the dimension of the unknown parameter, and $\mathbb{E}[\tau]$ is the expected delay. This result suggests that as the horizon increases, the impact of the delayed feedback will increase. This result is counter intuitive and starkly differs from the results in the $K$-armed bandit setting where the impact of the delay is independent of the horizon \citep{Joulani2013}. In this paper, we prove that a simple algorithm based on optimism can achieve a regret bound of $\widetilde{\mathcal{O}}(d\sqrt{T} + d^{3/2}\mathbb{E}[\tau])$. This improves the penalty for delayed feedback from $\sqrt{dT \mathbb{E}[\tau]}$ in prior work to $d^{3/2} \mathbb{E}[\tau]$, separating the delay penalty  from the horizon.\\

\subsection{Related Work}
The multi-armed bandit literature covers stochastically delayed feedback extensively. Both \citet{Joulani2013} and \citet{Mandel2015} propose queue-based approaches to adapt existing $K$-armed bandit algorithms to delayed feedback, each proving that the regret bound of the chosen algorithm only increases by an additive factor involving the expected delay. \cite{Pike-Burke2018} study another version of delayed feedback, where the rewards from various rounds are not only delayed but also aggregated. \cite{Vernade2017} consider the setting of delayed conversions, where actions associated with long delays can have censored feedback.\\ 

Comparatively, fewer theoretical results quantify the impact of delays beyond $K$-armed bandits. \citet{Vernade2020} consider a Bernoulli bandit with censored rewards whose expected value is linear in some unknown parameter vector. Combining Bernoulli rewards with delays makes it impossible to distinguish between a reward of zero and a delayed reward. Thus, the challenges they face are different to ours. Nevertheless, they deal with the delays by inflating the exploration bonus and handle the censoring by introducing a windowing parameter that sets rewards taking too long to return equal to zero. \citet{Dudik2011} develop a policy elimination algorithm capable of handling contextual information and prove a regret bound of the form $\widetilde{\mathcal{O}}(\sqrt{KT} + \sqrt{K}\tau)$, where $K$ is the number of actions and $\tau$ is a constant delay between playing an action and observing the corresponding reward. However, they remark that their algorithm is challenging to implement, requires perfect knowledge of the distribution of the contextual information, and needs a-priori knowledge of the constant delay. \\

\citet{Zhou2019} and follow-up work by \citet{Blanchet2020}, that analyses that same algorithm, consider learning in the same setting as us. They propose an optimistic algorithm that inflates the exploration bonus by the square root of the number of missing feedbacks. They do this to account for the uncertainty arising from the missing rewards. Combining this bonus with an elegant argument allows them to use standard theoretical tools to handle the leading-order terms, namely the elliptical potential lemma. This lemma has found applications in the analysis of many linear bandit algorithms and is provably tight \citep{Carpentier2020}. However, due to the delay-dependent bonuses, their arguments lead to a multiplicative increase in the regret of the form $\widetilde{\mathcal{O}}(d\sqrt{T} + \sqrt{dT(\mathbb{E}[\tau] + M_{\tau})})$, where $M_{\tau}$ is a known non-negative delay-dependent constant beyond which the delays have tails that are as heavy as (or lighter than) the exponential distribution. However, this algorithm requires prior knowledge of the expected delay and $M_{\tau}$ (along with other quantities). This theoretical result suggests that the impact of the delayed feedback increases with the horizon, which does not align with the intuition that the delays become irrelevant once the learner has observed enough feedback to obtain a "good" estimate of the underlying expected reward function.\\

\subsection{Contributions}
In this paper, we present a natural approach based on optimism that does not require any prior knowledge of the delays and achieves regret bound of $\widetilde{\mathcal{O}}(d\sqrt{T} + d^{3/2}\mathbb{E}[\tau]\,)$, up to problem-specific constants. This result significantly improves upon the best-known theoretical results for generalized linear bandits with delayed feedback, whose regret bound is $\widetilde{\mathcal{O}}(d\sqrt{T} +\sqrt{dT \mathbb{E}[\tau]}\,)$. Further, our results align with what is seen in the $K$-armed bandit setting, where the delays only impact the worst-case performance by an additive penalty involving the expected delay \citep{Joulani2013}, and not the horizon $T$.\\

In contrast to prior work, we forgo the period of forced exploration which is present in many generalised linear bandit algorithms \citep{Filippi2010, Li2017}. Our algorithm is optimistic and constructs optimistic estimates using only \emph{observations that have returned}. To do this, we develop delay-adapted confidence sets and prove that these are valid. Although this algorithm is natural, proving regret bounds for it is somewhat involved. In particular, the presence of delayed feedback obscures how selecting a sub-optimal action in round $t$ will improve the estimation in future rounds. To overcome these issues we provide a novel analysis centered around an elliptical potential lemma for delayed feedback, which may be of independent interest for bandit algorithms with complex feedback structures. We show that this technique leads to the stated regret bound. We also validate our theoretical findings experimentally in some simulated environments.\\

\section{Problem Formulation}\label{section: problem formulation}
The stochastic generalised linear bandit problem considers learning in an environment where the expected reward is a known function of the dot product between the action and the unknown parameter vector. Letting $X_{t}\in\mathcal{A}_{t}\subset \mathbb{R}^d$ and $Y_{t}\in \mathbb{R}$ be the action and reward associated with the $t$-th round, we assume that the conditional distribution of the reward given the action belongs to the exponential family:
\begin{equation}\label{equation: likelihood}
f\left(Y_{t}\,\vert\, X_{t},\, \theta^{*} \right) \propto \exp\left(\frac{Y_{t} X_{t}^{T} \theta^{*} - b\left(X_{t}^{T} \theta^{*}\right)}{a\left(\phi\right)}\right) 
\end{equation}
where $\theta^{*}\in\mathbb{R}^{d}$ is an unknown parameter vector; $a$ and $b$ are known distribution specific functions; and $\phi$ is a known constant that is often referred to as the \textit{dispersion parameter}. For distributions belonging to the exponential family, one can verify that: 
\begin{align*}
    \mathbb{E}\left[Y_{t}\,\vert\, X_{t}\right] = \mu\left(X_{t}^{T}\theta^{*}\right) = \dot{b}\left(X_{t}^{T} \theta^{*}\right).
\end{align*}
Here, $\mu(\cdot)$ is a strictly increasing \textit{link function} that relates the inner product of the action vector and the unknown parameter to the expected reward. For example, if the rewards are normally distributed, $\mu(z) = z$ and we recover the standard linear model. If the rewards are Bernoulli, then $\mu(z) = 1/(1 + \exp(-z))$ and we have a logistic regression model.\\

In the stochastic setting, the learner selects an action $X_t \in \mathcal{A}_{t}\subset \mathbb{R}^d$ and receives noisy observations of the unknown expected reward function of the form $Y_{t} \sim f\left(Y_{t}\,\vert X_{t},\, \theta^{*}\right)$ where
$$
\eta_{t} \coloneqq  Y_{t} - \mu\left(X_{t}^{T}\theta^{*}\right) 
$$
is the noise and is zero-mean conditional on past decisions and rewards. Section \ref{section: problem formulation - assumptions} formally states the assumptions we make on the link function and the noise.\\ 

The ultimate goal of the learner in the generalised linear bandit setting is to minimise the regret. Intuitively, this compares the expected reward of the action selected by the learner to the action with the highest expected reward. Mathematically, we define the regret of an algorithm in the generalised linear bandit setting as follows: 
\begin{equation}\label{equation: regret}
    \hat{R}_{T} = \sum_{t = 1}^{T} \mu\left(\langle X_{t}^{*}, \theta^{*}\,\rangle\right) - \mu\left(\langle X_{t}, \theta^{*}\,\rangle\right)\coloneqq \sum_{t = 1}^{T} \hat{r}_{t}
\end{equation}
where $X_{t}^{*} = \argmax_{x\in\mathcal{A}_{t}}\{\mu\left(\langle x, \,\theta^*\rangle\right)\}$ is the action in the decision set $\mathcal{A}_{t}$ maximising the expected reward in the $t$-th round. 

\subsection{Delayed Feedback Learning Setting}\label{section: problem formulation - learning setting}
Let $\tau_{t}\in[0, \infty)$ denote the random delay associated with the decision made in the $t$-th round. Then, the sequential decision-making procedure for generalized linear bandits under stochastically delayed feedback is as follows. For $t \in \{1, \cdots, T\}$:
\begin{itemize}
\item [1.] The learner receives a decision set: $\mathcal{A}_{t} \subset \mathbb{R}^{d}$ where $\norm{x}_2 \leq 1$ for all $x \in \mathcal{A}_{t}$.
\item [2.] The learner selects a $d$-dimensional feature vector from the decision set: $X_{t} \in \mathcal{A}_{t}$.
\item [3.] Unbeknownst to the learner, the environment generates a random delay, a random reward and then schedules an observation time:
\begin{itemize}
\item [3a.] The random reward has the form: $Y_{t} = \mu\left(X_{t}^{T}\theta^{*}\right) + \eta_{t}\,.$
\item [3b.] The environment schedules the observation time of the reward: $\ceil{t + \tau_{t}}$ where $\tau_{t} \sim f_{\tau}\left(\cdot\right)\,.$
\end{itemize}
\item [5.] The learner receives delayed rewards from its previous actions: $\{(s, Y_s): t - 1< s + \tau_s \leq t\}$.
\end{itemize}

From the above decision-making procedure, it is clear that the learner only has access to the rewards of the actions whose observation times are less than or equal to $t - 1$ when making decisions in round $t$. Therefore, $Y_s$ is observable to the learner in the rounds where $s + \tau_s \leq t - 1$. Otherwise, it is missing. To that end, we define the $\sigma$-algebra generated by the set of observable information at the start of the $t$-th round as: 
$$
\mathcal{F}_{t - 1} = \sigma \left(\left\{\left(X_s, C_s^{t - 1}, Y_s C_s^{t - 1}\,\right): s \leq t - 1\right\}\,\cup \mathcal{A}_{t}\right)
$$
where 
$$
C_{s}^{t} = \mathds{1}\left\{s + \tau_{s} \leq t\right\}
$$
indicates whether the reward associated with the $s$-th round is observable at the end of the $t$-th round. Naturally, $C_s^t$ is observable at the end of each round, as the learner can easily check which actions have and have not received feedback; this is standard in most works on delays in the bandit literature \citep{Dudik2011, Joulani2013, Mandel2015, Zhou2019, Blanchet2020}. 
Notably, $C_s^t$ is $\mathcal{F}_{t}$-measurable, meaning the learner only has access to the indicators and the rewards observed at the end of rounds $1, \cdots, t - 1$ when making decisions in round $t$.

\subsection{Assumptions}\label{section: problem formulation - assumptions}
We make the following assumptions on the noise and the link function. These are standard in the literature on linear and generalised linear bandits \citep{Filippi2010, Abbasi-Yadkori2011, Li2017}.

\begin{assumption}[Subgaussian Noise]\label{assumption: subgaussian reward}
Let $R \geq 0$ and $\abs{\eta_{t}} \leq R$ almost surely. Then, the moment generating function of the noise distribution conditional on the observed information must satisfy the following inequality:
\begin{equation*}
     \mathbb{E}\left[\exp\left(\gamma\,\eta_{t}\right)\vert \mathcal{F}_{t - 1}\right] \leq \exp\left(\frac{1}{2}\gamma^{2}R^{2}\right)
\end{equation*}
for all $\gamma \in \mathbb{R}$.
\end{assumption}

\begin{assumption}[Link Function]\label{assumption: link function}
The link function $\mu: \mathbb{R} \rightarrow \mathbb{R}$ is known a-priori and is twice differentiable with first and second derivatives bounded by $L_{\mu}$ and $M_{\mu}$, respectively. Further, 
$$
\kappa \coloneqq \inf\left\{\dot{\mu}\left(\langle x, \theta \,\rangle\right): \left(x, \theta\right)\in \mathcal{A}\times \Theta \right\} > 0.
$$
where $\Theta$ is the set of all possible parameter vectors.
\end{assumption}

Assumption \ref{assumption: subgaussian reward} implies that the noise distribution has light tails. Assumption \ref{assumption: link function} implies that the link function is $L_{\mu}$-Lipschitz. One can interpret the condition on $\kappa$ as guaranteeing that it is possible to distinguish between two actions whose expected rewards are arbitrarily close to one another. Indeed, $R$, $L_{\mu}$ and $\kappa$ all feature in the theoretical analysis and regret bounds.\\

It will also be necessary for the delays to satisfy some assumptions (see Section \ref{section: delayed ofu-glm - regret bounds}). In particular, we assume the following holds.
\begin{assumption}[Subexponential Delays]\label{assumption: subexponential delays}
The delays are non-negative, independent and identically distributed $(v, b)$-subexponential random variables. That is, their moment generating function satisfies the following inequality:
\begin{equation*}
    \mathbb{E}\left[\exp\left(\gamma \left(\tau_{t} - \mathbb{E}\left[\tau_{t}\right]\right)\right)\right] \leq \exp\left(\frac{1}{2}v^2 \gamma^2\right)
\end{equation*}
for some  non-negative $v$ and $b$, and all $\abs{\gamma} \leq 1/b$.
\end{assumption}
The class of distributions with subexponential tail behaviour is broad enough to include many heavy-tailed distributions, such as the $\chi^{2}$ and Exponential distributions. Importantly, Assumption \ref{assumption: subexponential delays} aligns with the empirical evidence suggesting that delays have exponential-like tails in practice \citep{Chapelle2014}. However, other tail bounds on the delays can be used if they exist. Furthermore, it is possible to relax this assumption to only requiring that the delays have a finite (unknown) expected value by considering the expected regret, a weaker theoretical guarantee. 

\subsection{Notation}
Throughout, $\norm{x}_{p}$ denotes the $p$-norm of an arbitrary vector $x\in\mathbb{R}^{d}$. For $A, B \in \mathbb{R}^{d\times d}$, we denote $\norm{x}_{A} = \sqrt{x^T A x}$ and adopt the following notation for positive (semi)-definite matrices: 
\begin{itemize}
    \item $A \succeq 0$ (positive semi-definite) $\iff \norm{x}_{A}^{2} \geq 0$ for all $x\in\mathbb{R}^{d}$.
    \item $A \succeq B$ $\iff \norm{x}_{A}^2 \geq \norm{x}_{B}^{2}$ for all $x\in\mathbb{R}^{d}$.
\end{itemize}
Additionally, $\lambda_{i}(A)$ and $\sigma_{i}(A)$ denote the $i$-th largest eigenvalue and the $i$-th largest singular value of matrix $A$, respectively. Finally, we denote the first and second derivatives of a real-valued function $f$ by $\dot{f}$ and $\Ddot{f}$, respectively. 

\section{Delayed OFU-GLM}\label{section: algorithm}
In this section, we describe a provably efficient algorithm for generalised linear bandits with stochastic delays. We base our approach on the optimistic principle and show that delays only cause an additive increase in the regret bound. This is in contrast to the multiplicative effect seen in existing work \citep{Blanchet2020}.\\

Due to the delays, it is necessary to introduce some additional notation that discriminates between rounds whose feedback has or has not been observed. Denote the number of missing rewards at the end of the $t$-th round by:
\begin{equation*}
    G_{t} = \sum_{s = 1}^{t}\mathds{1}\left\{s + \tau_{s} > t\right\}\;.
\end{equation*}
Further, we define the total, observed and missing design matrices as
\begin{align}
    \bar{V}_{t} &= \lambda I + \sum_{s = 1}^{t} X_{s}X_{s}^{T} \label{equation: total design matrix}\\
    \bar{W}_{t}  &= \lambda I + \sum_{s = 1}^{t}  \mathds{1}\{s + \tau_{s} \leq t\}X_{s}X_{s}^{T}\label{equation: observed design matrix}\\
    Z_{t} &= \sum_{s = 1}^{t}  \mathds{1}\{s + \tau_{s} > t\} X_{s}X_{s}^{T}\label{equation: missing design matrix}\,,
\end{align}
respectively. Here, $\lambda > 0$ is a regularisation parameter. Briefly, $\bar{V}_{t}$ is the total design matrix and contains information relating to all past choices. Whereas $\bar{W}_{t}$ and $Z_{t}$ include information about actions with and without observed rewards, respectively. It is easy to see that the total, observed and missing design matrices must satisfy the following relationship:
\begin{equation}\label{equation: matrix relationship}
    \bar{V}_{t} = \bar{W}_{t} + Z_{t}\;.
\end{equation}
Thus, when there are no delays, the total and observed design matrices are equivalent to each other, and the missing design matrix is full of zeros.

\subsection{Estimation Procedure}
As is standard when fitting generalised linear models, we use maximum likelihood estimation to estimate the unknown parameter of the environment. However, we make several adjustments to the estimator to account for delayed feedback.\\

First note that not all actions played will have received feedback. To mitigate this issue, we ignore the actions with missing feedback in our estimation procedure. Secondly, many existing algorithms for generalised linear bandits use a phase of pure exploration \citep{Filippi2010, Li2017}. This exploration phase lasts until the observed design matrix is of full rank, which ensures a unique maximiser of the likelihood function exists. Since we choose to ignore actions with missing feedback, the length of this exploration phase will depend on the delay distribution.\\

To avoid waiting for an exploration phase to pass, we introduce a penalisation term into the objective function, an idea that we borrow from the linear bandits where one can derive a closed-form penalised maximum likelihood estimator \citep{Abbasi-Yadkori2011, Chu2011}. In the generalised linear setting, this trick equates to penalising the log-likelihood function and has found use for logistic bandits under immediate feedback \citep{Jun2017, Faury20a}. From Equation \eqref{equation: likelihood} and the conditional independence of the rewards given past actions, one can write the penalised log-likelihood as follows:
\begin{equation}\label{equation: log likelihood}
    \mathcal{L}_{t}\left(\theta, \alpha\right) = \sum_{s = 1}^{t}C_{s}^{t}\,\log\left(f\left(Y_{t}\,\vert X_{t}\right)\right) 
    - \frac{\alpha}{2}\norm{\theta}_{2}^{2}
\end{equation}
Equation \eqref{equation: log likelihood} always has a unique maximiser due to the introduction of $\alpha > 0$, which means we can leverage new information from the very first round. One can easily verify that the maximiser is the solution of the following equation:
\begin{equation}\label{equation: mle}
    \left[\sum_{s = 1}^{t} C_{s}^{t}\left(Y_{s} - \mu\left(X_{t}^{T} \theta\right)\right) X_{s}\right] - \alpha\, a(\phi) \, \theta = 0 
\end{equation}
where $a(\phi)$ is a known function of the dispersion parameter of the reward distribution. We denote the solution of Equation \eqref{equation: mle} by $\hat{\theta}_t$. To implement the optimistic principle, we construct confidence sets around our estimators and prove that this set contains $\theta^{*}$ with high probability. 

\begin{restatable}{lemma}{confset}\label{lemma: ofu-glm confidence set}
Let $\lambda = \alpha \,a(\phi) /\kappa$ and assume that $\norm{\theta^{*}}_{2} \leq m_{1}$. Then, with probability at least $1 - \delta$, for all rounds $t\geq 0$:
$$
\norm{\hat{\theta}_{t} - \theta^{*}}_{\bar{W}_{t}} \leq  \sqrt{\lambda}m_{1} + \frac{R}{\kappa} \sqrt{2\log\left(\frac{\det\left(\bar{W}_{t}\right)^{1/2}}{\delta\,\lambda^{d/2}}\right)}
$$
\end{restatable}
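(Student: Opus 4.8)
The plan is to follow the now-standard recipe for generalised-linear-model confidence sets (cf.\ \citet{Filippi2010, Li2017}), adapted in two ways: only the observed rounds enter the design matrix, and the ridge penalty $\alpha a(\phi)$ replaces the usual forced-exploration phase. Modulo a self-normalised tail bound for a \emph{delayed} noise sum, the rest is routine. Concretely, I would first linearise the estimating equation \eqref{equation: mle}. Substituting $Y_s = \mu(X_s^T\theta^*) + \eta_s$ into \eqref{equation: mle} evaluated at $\hat\theta_t$, and writing $S_t := \sum_{s=1}^t C_s^t \eta_s X_s$, gives $\sum_{s=1}^t C_s^t\big(\mu(X_s^T\hat\theta_t) - \mu(X_s^T\theta^*)\big)X_s + \alpha a(\phi)\hat\theta_t = S_t$. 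Applying the fundamental theorem of calculus along the segment from $\theta^*$ to $\hat\theta_t$ rewrites the sum as $H_t(\hat\theta_t - \theta^*)$ with $H_t := \int_0^1 \sum_{s=1}^t C_s^t \dot\mu\big(X_s^T(\theta^* + v(\hat\theta_t - \theta^*))\big) X_s X_s^T \, dv$, so that, setting $\tilde H_t := H_t + \alpha a(\phi) I$,
\[
\tilde H_t (\hat\theta_t - \theta^*) = S_t - \alpha a(\phi)\,\theta^*.
\]
Since $\dot\mu \geq \kappa$ by Assumption \ref{assumption: link function}, $H_t \succeq \kappa(\bar W_t - \lambda I)$, and the choice $\lambda = \alpha a(\phi)/\kappa$ is exactly what makes the penalty cancel the $-\kappa\lambda I$, so $\tilde H_t \succeq \kappa \bar W_t$.

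From here I would extract the $\bar W_t$-norm bound by a Cauchy--Schwarz argument in the $\bar W_t$ inner product. Using $\tilde H_t \succeq \kappa \bar W_t \succ 0$,
\[
\kappa \norm{\hat\theta_t - \theta^*}_{\bar{W}_{t}}^2 \leq (\hat\theta_t - \theta^*)^T \tilde H_t (\hat\theta_t - \theta^*) = (\hat\theta_t - \theta^*)^T (S_t - \alpha a(\phi)\theta^*) \leq \norm{\hat\theta_t - \theta^*}_{\bar{W}_{t}}\Big(\norm{S_t}_{\bar{W}_{t}^{-1}} + \alpha a(\phi)\norm{\theta^*}_{\bar{W}_{t}^{-1}}\Big).
\]
Dividing through, and bounding the deterministic term via $\bar W_t \succeq \lambda I$, $\norm{\theta^*}_2 \leq m_1$, and $\alpha a(\phi) = \kappa\lambda$ (so that $\alpha a(\phi)\norm{\theta^*}_{\bar{W}_{t}^{-1}} \leq \kappa\sqrt\lambda\, m_1$), yields $\norm{\hat\theta_t - \theta^*}_{\bar{W}_{t}} \leq \sqrt\lambda\, m_1 + \tfrac1\kappa \norm{S_t}_{\bar{W}_{t}^{-1}}$. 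It remains to prove that, with probability at least $1-\delta$ and simultaneously over all $t$, $\norm{S_t}_{\bar{W}_{t}^{-1}} \leq R\sqrt{2\log(\det(\bar W_t)^{1/2}/(\delta\lambda^{d/2}))}$.

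This final concentration step is where the delay enters and is the main obstacle. Without delays ($C_s^t \equiv 1$) it is exactly the self-normalised bound of \citet{Abbasi-Yadkori2011}, since $(\eta_s)$ is a mean-zero $R$-subgaussian martingale difference sequence with predictable regressors $X_s$. With delays the effective regressor of round $s$ is $C_s^t X_s$, which depends on $t$, so the theorem does not apply off the shelf. My plan is to condition on the whole delay sequence $\tau_{1:T}$ --- which, by Assumption \ref{assumption: subexponential delays} and the fact that the delays are generated independently of the rewards, leaves $(\eta_s)$ a mean-zero $R$-subgaussian martingale difference sequence --- and then relabel the observed rounds in order of arrival. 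After this (now deterministic) relabelling, $\bar W_t$ is precisely the ridge-regularised Gram matrix of the first $t - G_t$ relabelled regressors, so the ordinary method-of-mixtures supermartingale construction applies to the relabelled sequence and delivers the claimed bound for every prefix length, in particular for the prefix of length $t - G_t$; integrating back over $\tau_{1:T}$ removes the conditioning. The step I expect to need the most care is verifying that the arrival-order filtration really does make each relabelled noise a martingale difference: one must check that conditioning on feedback that has \emph{arrived} earlier --- but may have been \emph{played} later --- does not spoil the zero-mean property of a given round's noise, and this is exactly where independence of the delays from the rewards and the actions is used.
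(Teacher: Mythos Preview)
Your deterministic reduction to $\norm{\hat\theta_t-\theta^*}_{\bar W_t}\le\sqrt\lambda\,m_1+\tfrac1\kappa\norm{S_t}_{\bar W_t^{-1}}$ is correct and essentially identical to the paper's Lemmas~\ref{lemma: mean value theorem} and~\ref{lemma: separate bias contribution}; you use the integral form of the mean-value theorem where the paper uses the pointwise one, but the output is the same.

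The concentration step is where your route genuinely diverges from the paper's. The paper does \emph{not} condition on the delays or relabel by arrival order. Instead it keeps the play-order filtration $\mathcal{F}_t$ of Equation~\eqref{equation: sigma-algebra}, observes that the missing-reward indicators $1-C_s^{t-1}$ are $\mathcal{F}_{t-1}$-measurable, and shows directly (Lemma~\ref{lemma: ofu-glm supermartingale}) that $M_t(x)=\exp\bigl(\tfrac1R\langle x,S_t\rangle-\tfrac12\norm{x}_{W_t}^2\bigr)$ is a non-negative supermartingale with respect to $(\mathcal{F}_t)$: writing $M_t(x)=M_{t-1}(x)\cdot P_t$ where $P_t$ carries the not-yet-arrived contributions, one pulls the $\mathcal{F}_{t-1}$-measurable factors out and bounds the conditional MGF of the noise terms. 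The method of mixtures then applies verbatim.

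Your relabelling plan is plausible but the gap you flag is real and slightly wider than you describe. You say the fix is ``independence of the delays from the rewards and the actions,'' but that alone is not enough. In the arrival-order filtration you will condition $\eta_s$ on some $\eta_{s'}$ with $s'>s$ that happened to arrive earlier; Assumption~\ref{assumption: subgaussian reward} only controls $\eta_t\mid\mathcal{F}_{t-1}$ in \emph{play} order and says nothing about $\eta_s\mid\sigma(\eta_{s'})$. To close this you need the noises to be conditionally independent given the actions (which the generative model $Y_t\sim f(\cdot\mid X_t,\theta^*)$ does imply, but which is strictly stronger than Assumption~\ref{assumption: subgaussian reward} as written). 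You also need your arrival-order filtration to carry all \emph{played} actions, not just the arrived pairs, since $X_{(k+1)}$ may depend on $X_{s}$ for rounds $s$ whose rewards are still outstanding at the $(k{+}1)$-th arrival; this is fixable (once delays are frozen, each $X_s$ is a deterministic function of the decision sets and the noises that have arrived by round $s-1$, and those are among the first $k$ arrivals), but it needs to be said. The paper's direct supermartingale argument sidesteps all of this bookkeeping by never leaving the play-order filtration.
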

\begin{hproof}
Firstly, we account for regularising the log-likelihood function, which we do in Lemmas \ref{lemma: mean value theorem} and \ref{lemma: separate bias contribution} of Appendix \ref{section: confidence sets}. These lemmas allow us to separate noise-related terms from those introduced by biasing our estimator with the regularisation term. Subsequently, we show that the noise-related terms satisfy the martingale property under the information structure created by the delays. This result allows us to apply existing results for self-normalising processes \citep{Pena2004, Abbasi-Yadkori2011}. See Appendix \ref{section: confidence sets} for a full proof.  
\end{hproof}
By Lemma \ref{lemma: ofu-glm confidence set}, defining the confidence sets as:
\begin{equation}\label{equation: confidence set}
    \mathcal{C}_{t} = \left\{\theta \in \mathbb{R}^{d}:\norm{\hat{\theta}_{t} - \theta }_{\bar{W}_{t}} \leq \sqrt{\beta_{t}}\right\}
\end{equation}
with 
\begin{equation}
    \sqrt{\beta_{t}} = \sqrt{\lambda} m_{1} + \frac{R}{\kappa} \sqrt{2\log\left(\frac{\det\left(\bar{W}_{t}\right)^{1/2}}{\delta\,\lambda^{d/2}}\right)}\label{equation: confidence width}
\end{equation}
guarantees that $\mathbb{P}(\exists\,t \geq 0: \theta^{*}\not\in \mathcal{C}_{t}) \leq 1 - \delta$. 

\subsection{Delayed OFU for Generalised Linear Bandits}
Algorithm \ref{algorithm: delayed ofu-glm} presents the pseudo-code for our algorithm, Delayed OFU-GLM. It requires several input parameters that we briefly discuss below. 

\begin{algorithm}\caption{Delayed OFU-GLM}\label{algorithm: delayed ofu-glm}

\begin{algorithmic}
\STATE \textbf{Input:} model parameters $d$, $a(\phi)$, $m_{1}$, $\kappa$, and tuning parameters $\alpha > 0$ and $\delta \in (0, 1)$.

\STATE \textbf{Initialise:} $\hat{\theta}_{0} = \vec{0}$, $\lambda = \frac{\alpha\,a(\phi)}{\kappa}$ and $\bar{W}_{0} = \lambda I$
    \FOR{$t = 1$ {\bfseries to} $T$}
        \STATE Play $X_{t}$ where: 
        $$
        (X_{t}, \tilde{\theta}_{t}) = \argmax_{(x, \theta)\in\mathcal{A}_{t}\times \mathcal{C}_{t - 1}} \mu\left(x^{T} \theta\right) 
        $$
        \STATE Receive the (possible empty) set of delayed rewards.
        \STATE Update $\bar{W}_{t}$, $\hat{\theta}_{t}$ and $\beta_{t}$ via Equations \eqref{equation: observed design matrix}, \eqref{equation: mle} and \eqref{equation: confidence width}. 
    \ENDFOR
\end{algorithmic}
\end{algorithm}

Firstly, algorithm requires knowledge of $a(\phi)$, a known function of the dispersion parameter of the reward distribution. For Bernoulli and Poisson rewards, one can show that $a(\phi) = 1$. In the Gaussian case, this parameter is the variance of the reward distribution $a(\phi) = R^2$, which all optimistic algorithms require to define the confidence sets.\\

Secondly, $m_{1} \geq \norm{\theta^{*}}_{2}$ is an upper bound on the $\ell_{2}$-norm of the unknown parameter vector that features in many existing algorithms for the immediate feedback setting \citep{Abbasi-Yadkori2011, Jun2017, Faury20a}. Note that since \citep{Zhou2019} uses a period of explicit exploration, they do not need this hyperparameter. Instead, they require knowledge of the delay distribution to define the length of the exploration phase. \\

Finally, $\kappa$ quantifies the smallest possible rate of change in the expected reward function. For Linear bandits with Gaussian rewards, $\kappa = 1$. For other distributions, one can replace this quantity with a lower bound and our theoretical results will still hold. For Logistic bandits, one can utilise the fact that the first derivative of the link function is symmetric about zero and decreasing to show that: $\kappa \geq  m_2 \coloneqq \dot{\mu}(m_{1})$. For Poisson bandits, by the definition of the inner product, $\kappa \geq m_2 \coloneqq \exp(-m_{1})$. Indeed, many optimistic algorithms for generalised linear bandits require this hyperparameter, as it features in the definition of the confidence sets. Recent work removes the need to specify this hyperparameter for the logistic bandit \citep{Faury20a}.

\subsection{Regret Bounds for Delayed OFU-GLM}\label{section: delayed ofu-glm - regret bounds}
In this subsection, we state and prove a worst-case regret bound for our algorithm. Specifically, Algorithm~\ref{algorithm: delayed ofu-glm} only suffers an additive penalty caused by the delays under the assumptions outlined in Section \ref{section: problem formulation - assumptions}. 

\begin{theorem}\label{theorem: delayed ofu-glm}
Suppose $\norm{x}_{2} \leq 1$ for all $x \in \cup_{t = 1}^{\infty} \mathcal{A}_{t}$, and Assumptions \ref{assumption: subgaussian reward}, \ref{assumption: link function} and \ref{assumption: subexponential delays} hold. Then, with probability greater than $1 - 3\delta$, Delayed OFU-GLM with any regularisation parameter $\lambda = \alpha\,a(\phi)/\kappa \geq 1$ has pseudo-regret that satisfies:
\begin{equation*}
    \hat{R}_{T} \leq \tilde{\mathcal{O}}\left(\frac{d R L_{\mu}}{\kappa}\sqrt{T} +  \frac{d^{3/2} RL_{\mu}\left(\mathbb{E}\left[\tau\right] + \min\left\{v, b\right\}\right)}{\kappa}\right)
\end{equation*}
where $v$ and $b$ are the subexponential parameters of the delay distribution.
\end{theorem}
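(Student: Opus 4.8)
The plan is to run the usual optimism-based regret decomposition, but to replace the single global application of the elliptical potential lemma by a round-by-round split that isolates the ``delay-sensitive'' rounds. First I would work on the intersection of three events: the event of Lemma~\ref{lemma: ofu-glm confidence set} (so $\theta^{*}\in\mathcal{C}_{t}$ for every $t$), an event on which $\max_{t\le T}G_{t}\le \mathbb{E}[\tau]+\widetilde{\mathcal{O}}(\min\{v,b\})$, and an event on which the number of rounds with an atypically large delay, $\sum_{t\le T}\mathds{1}\{\tau_{t}>\vartheta\}$ for a suitable threshold $\vartheta$, is $\widetilde{\mathcal{O}}(1)$. Since $G_{t}$ is a sum of independent Bernoulli variables with mean at most $\mathbb{E}[\tau]+1$ and the $\tau_{t}$ are $(v,b)$-subexponential (Assumption~\ref{assumption: subexponential delays}), a Bernstein-type bound gives each of the latter two events probability at least $1-\delta$, so by a union bound everything below holds with probability at least $1-3\delta$, as in the statement.

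On this event, optimism of the action rule gives, for every $t$, $\mu(\langle X_{t}^{*},\theta^{*}\rangle)\le\mu(\langle X_{t},\tilde{\theta}_{t}\rangle)$ because $(X_{t},\tilde{\theta}_{t})$ maximises $\mu(x^{T}\theta)$ over $\mathcal{A}_{t}\times\mathcal{C}_{t-1}$ while $(X_{t}^{*},\theta^{*})$ lies in that set. Using the $L_{\mu}$-Lipschitzness of $\mu$ from Assumption~\ref{assumption: link function}, Cauchy--Schwarz in the $\bar{W}_{t-1}$-norm, and $\tilde{\theta}_{t},\theta^{*}\in\mathcal{C}_{t-1}$, this yields $\hat{r}_{t}\le 2L_{\mu}\sqrt{\beta_{t-1}}\,\|X_{t}\|_{\bar{W}_{t-1}^{-1}}$, and since $\|X_{t}\|_{\bar{W}_{t-1}^{-1}}^{2}\le 1/\lambda\le 1$ and $\beta_{t}$ is non-decreasing we may also clip this at $2L_{\mu}\sqrt{\beta_{T}}$. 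The bound $\det(\bar{W}_{T})\le(\lambda+T/d)^{d}$ turns $\sqrt{\beta_{T}}$ into $\widetilde{\mathcal{O}}(\sqrt{\lambda}\,m_{1}+(R/\kappa)\sqrt{d})$, so the whole problem reduces to controlling $\sum_{t=1}^{T}\|X_{t}\|_{\bar{W}_{t-1}^{-1}}$.

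This is where a naive argument fails and where the new analysis is needed: because $\bar{W}_{t-1}$ contains only the actions whose rewards have already returned, $X_{t}X_{t}^{T}$ is absorbed into $\bar{W}$ only at round $\lceil t+\tau_{t}\rceil$, so the usual telescoping of $\log\det\bar{W}_{t}$ no longer charges round $t$; applying Cauchy--Schwarz to the entire sum and then the standard elliptical potential lemma only recovers the sub-optimal $\widetilde{\mathcal{O}}(\sqrt{dT\,\mathbb{E}[\tau]})$ penalty of prior work. Instead I would split the rounds into a set $\mathcal{T}_{u}$ of \emph{under-explored} rounds, where $\|X_{t}\|_{\bar{W}_{t-1}^{-1}}^{2}$ exceeds a fixed constant $c\in(0,1)$, and its complement $\mathcal{T}_{u}^{c}$. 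On $\mathcal{T}_{u}^{c}$ the observed design matrix behaves along $X_{t}$ like the full design matrix, so a delay-adapted version of the elliptical potential lemma gives $\sum_{t\in\mathcal{T}_{u}^{c}}\|X_{t}\|_{\bar{W}_{t-1}^{-1}}^{2}=\widetilde{\mathcal{O}}(d)$, and Cauchy--Schwarz over at most $T$ rounds bounds this part of the regret by $\widetilde{\mathcal{O}}\big((dRL_{\mu}/\kappa)\sqrt{T}\big)$. For $t\in\mathcal{T}_{u}$ I would use only the clipped per-round bound $\hat{r}_{t}\le 2L_{\mu}\sqrt{\beta_{T}}$, so it remains to show $|\mathcal{T}_{u}|=\widetilde{\mathcal{O}}\big(d(\mathbb{E}[\tau]+\min\{v,b\})\big)$; multiplying by $\sqrt{\beta_{T}}=\widetilde{\mathcal{O}}((R/\kappa)\sqrt{d})$ then produces exactly the claimed additive term $\widetilde{\mathcal{O}}\big(d^{3/2}RL_{\mu}(\mathbb{E}[\tau]+\min\{v,b\})/\kappa\big)$.

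The main obstacle is therefore the count bound $|\mathcal{T}_{u}|=\widetilde{\mathcal{O}}\big(d(\mathbb{E}[\tau]+\min\{v,b\})\big)$, the ``elliptical potential lemma for delayed feedback.'' The idea I have in mind is that a round $t$ can be under-explored only if the actions that would cover the direction $X_{t}$ in $\bar{V}_{t-1}=\bar{W}_{t-1}+Z_{t-1}$ but not in $\bar{W}_{t-1}$ are non-trivial, and the number of such ``missing'' actions is at most $G_{t-1}$ plus the number of rounds played in the window $[t,\lceil t+\tau_{t}\rceil)$; on the delay-concentration events the former is $\widetilde{\mathcal{O}}(\mathbb{E}[\tau]+\min\{v,b\})$ uniformly in $t$, and the rounds with an atypically large second term are few. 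One then runs the classical counting step in this reordered form: once a reward returns it permanently contributes to $\bar{W}$, an under-explored round replayed after its own reward (or a reward in its direction) has returned is impossible, and at most $\widetilde{\mathcal{O}}(d)$ genuinely new informative returns can occur before $\det\bar{W}$ saturates — so the number of under-explored rounds is at most $\widetilde{\mathcal{O}}(d)$ times the number of observations that can be simultaneously in flight, which is $\mathbb{E}[\tau]+\widetilde{\mathcal{O}}(\min\{v,b\})$. Combining the two parts and collecting the $\widetilde{\mathcal{O}}$ constants ($R$, $L_{\mu}$, $\kappa$, $\lambda$, $m_{1}$, and the $\log T$, $\log(1/\delta)$ factors) gives the theorem.
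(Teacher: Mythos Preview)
Your overall plan is genuinely different from the paper's: the paper never partitions rounds into under- and well-explored sets, but instead writes $\bar{W}_{t-1}^{-1}=\bar{V}_{t-1}^{-1}+M_{t-1}$ with $M_{t-1}=\bar{V}_{t-1}^{-1}Z_{t-1}\bar{W}_{t-1}^{-1}$ (Lemma~\ref{lemma: matrix inverse relationship}) and proves the pointwise inequality $\|X_{t}\|_{M_{t-1}}\le\tfrac{1+G_{*}+\tau_{t}}{2}\,\|X_{t}\|_{\bar{V}_{t-1}^{-1}}^{2}$ (Lemma~\ref{lemma: product elliptical potential}). The decisive feature is that the delay-dependent prefactor multiplies the \emph{square} $\|X_{t}\|_{\bar{V}_{t-1}^{-1}}^{2}$, which sums to $\widetilde{\mathcal{O}}(d)$ by the ordinary elliptical potential lemma; this is precisely what converts the penalty from multiplicative to additive.

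The gap in your argument is the claim $\sum_{t\in\mathcal{T}_{u}^{c}}\|X_{t}\|_{\bar{W}_{t-1}^{-1}}^{2}=\widetilde{\mathcal{O}}(d)$. Membership in $\mathcal{T}_{u}^{c}$ only says $\|X_{t}\|_{\bar{W}_{t-1}^{-1}}^{2}\le c$; it does \emph{not} imply that $\bar{W}_{t-1}$ behaves like $\bar{V}_{t-1}$ along $X_{t}$, and no telescoping identity is available because $\bar{W}_{t}$ is not incremented by $X_{t}X_{t}^{T}$. The sharpest general comparison is $\bar{W}_{t-1}^{-1}\preceq(1+G_{t-1}/\lambda)\bar{V}_{t-1}^{-1}$ (this is essentially the paper's Lemma~\ref{lemma: product matrix bound}), which yields only $\sum_{t\in\mathcal{T}_{u}^{c}}\|X_{t}\|_{\bar{W}_{t-1}^{-1}}^{2}\le(1+G_{*}/\lambda)\cdot\widetilde{\mathcal{O}}(d)=\widetilde{\mathcal{O}}(d\,\mathbb{E}[\tau])$; Cauchy--Schwarz then produces $\sqrt{dT\,\mathbb{E}[\tau]}$ and you are back to the multiplicative penalty. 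Concretely, take $d=1$, $X_{t}\equiv 1$, constant delay $D$, and any $\lambda>1/c$ (allowed by the theorem): then every round is in $\mathcal{T}_{u}^{c}$ and $\sum_{t}\|X_{t}\|_{\bar{W}_{t-1}^{-1}}^{2}\approx D/\lambda+\log T$, which is not $\widetilde{\mathcal{O}}(1)$. Your count $|\mathcal{T}_{u}|=\widetilde{\mathcal{O}}(d(\mathbb{E}[\tau]+\min\{v,b\}))$ can indeed be made rigorous (e.g.\ via $\bar{W}_{t-1}\succeq\bar{V}_{t-1-D}$ on the event $\{\tau_{s}\le D\ \forall s\}$ and a residue-class elliptical-potential count), but the additive-versus-multiplicative battle is fought on $\mathcal{T}_{u}^{c}$, and the ``delay-adapted elliptical potential lemma'' you invoke there is neither stated nor proved and, in the form you need, appears to be false.
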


\begin{proof}
First, we bound the per-round pseudo-regret. In Algorithm \ref{algorithm: delayed ofu-glm},the action selected by the algorithm is optimistic with probability $1 - \delta$. Therefore,
\begin{align*}
    \hat{r}_{t}  &= \mu\left(\langle  \theta^{*},\, X_{t}^{*}\rangle\right) - \mu\left(\langle\theta^{*},\, X_{t}\rangle\right)\\
    &\leq L_{\mu}\left(\langle  \theta^{*},\, X_{t}^{*}\rangle - \langle\theta^{*},\, X_{t}\rangle\right) \tag{Assumption \ref{assumption: link function}}\\
    &\leq L_{\mu}\left(\langle \tilde{\theta}_{t} - \theta^{*}, \, X_{t}\rangle\right) \tag{Lemma \ref{lemma: ofu-glm confidence set}}
\end{align*}
where the final inequality holds with probability at least $1 - \delta$ across all rounds due to the definition of the confidence sets and the action-selection procedure in Algorithm \ref{algorithm: delayed ofu-glm}. Adding and subtracting the maximum likelihood estimator gives: 
\begin{align*}
    \hat{r}_{t}  &\leq L_{\mu}\left(\langle \tilde{\theta}_{t} - \hat{\theta}_{t - 1}, \, X_{t}\rangle + \langle \hat{\theta}_{t - 1} - \theta^{*}, \, X_{t}\rangle\right)\\
    &\leq L_{\mu}\norm{\tilde{\theta}_{t} - \hat{\theta}_{t - 1}}_{\bar{W}_{t - 1}}\norm{X_{t}}_{\bar{W}_{t - 1}^{-1}} + L_{\mu}\norm{\hat{\theta}_{t - 1} - \theta^{*}}_{\bar{W}_{t - 1}}\norm{X_{t}}_{\bar{W}_{t - 1}^{-1}} \tag{H\"{o}lder's}\\
    &\leq 2L_{\mu}\sqrt{\beta_{t - 1}}\norm{X_{t}}_{\bar{W}_{t - 1}^{-1}} \tag{Definition of $\mathcal{C}_{t - 1}$}\\
    &\leq 2L_{\mu}\sqrt{\beta_{T}}\norm{X_{t}}_{\bar{W}_{t - 1}^{-1}} \tag{$\beta_{1} \leq \beta_{2} \leq \cdots \leq \beta_{T}$}
\end{align*}
Therefore, we have that the pseudo-regret has the following upper bound:
\begin{equation}\label{equation: terms to bound}
    \hat{R}_{T}
    \leq 2L_{\mu}\sqrt{\beta_{T}}\sum_{t = 1}^{T}\norm{X_{t}}_{\bar{W}_{t - 1}^{-1}}
\end{equation}
with probability at least $1 - \delta$. Usually, an application of Cauchy-Schwarz and the elliptical potential lemma handles the remaining summation. This algebraic argument completes the proof in the immediate feedback setting and provides a tight upper bound on the term in question \citep{Carpentier2020}. However, the elliptical potential lemma requires that the learner updates the design matrix at the end of every round with the most recent action.\\

This is not the case for the summation in \eqref{equation: terms to bound}, as the feedback associated with the most recent action is not necessarily observable immediately and is, therefore, not used to increment the observed design matrix. Moreover, there will likely be rounds where no feedback arrives at all and rounds where multiple feedbacks return to the learner, meaning that the matrix determinant lemma does not hold; a key argument in the proof. Consequently, we introduce the following technical lemmas that aid in bounding the summation.

\begin{restatable}{lemma}{inverse}\label{lemma: matrix inverse relationship}
Let $\lambda = \alpha\,a(\phi)/\kappa > 0$. Then, $\bar{W}_{t}$ and $\bar{V}_{t}$ are invertible and have inverses that satisfy the following relationship: 
\begin{equation*}
    \bar{W}_{t}^{-1} = \bar{V}_{t}^{-1} + \bar{V}_{t}^{-1} Z_{t}\:\bar{W}_{t}^{-1} = \bar{V}_{t}^{-1} + M_{t}
\end{equation*}
where $M_{t} \coloneqq \bar{V}_{t}^{-1} Z_{t}\:\bar{W}_{t}^{-1}$.
\end{restatable}
\begin{proof}
See Appendix \ref{section: technical lemmas}.
\end{proof}

\begin{restatable}{lemma}{potential}\label{lemma: product elliptical potential}
Let $\{\tau_{t}\}_{t = 1}^{\infty}$ be an arbitrary sequence of non-negative random variables. Then, for $\lambda = \alpha\,a(\phi)/\kappa \geq 1$:
\begin{align*}
    \sum_{t = 1}^{T}\norm{X_{t}}_{M_{t - 1}} &\leq    \sum_{t = 1}^{T}\frac{1 + G_{*} + \tau_{t}}{2}\,\norm{X_{t}}_{\bar{V}_{t - 1}^{-1}}^{2}
\end{align*}
where $G_{*} = \max\{G_{t}: t \leq T\}$.
\end{restatable}
\begin{proof}
See Appendix \ref{section: technical lemmas}.
\end{proof}

Lemma \ref{lemma: matrix inverse relationship} relates the inverse of the observed design matrix to the inverse of the total design matrix and a product of three matrices. This allows us to separate the usual elliptical potential from terms involving the delays by application of the triangle inequality. Then, Lemma \ref{lemma: product elliptical potential} shows that we can relate the remaining summation to a lower-order term. More concretely, 
\begin{align}
    &\sum_{t = 1}^{T}\norm{X_{t}}_{\bar{W}_{t - 1}^{-1}} = \sum_{t = 1}^{T}\norm{X_{t}}_{\bar{V}_{t - 1}^{-1} + M_{t}}\tag{Lemma \ref{lemma: matrix inverse relationship}}\nonumber\\
    &\leq \sum_{t = 1}^{T}(\norm{X_{t}}_{\bar{V}_{t - 1}^{-1}} + \norm{X_{t}}_{M_{t - 1}})\nonumber\tag{Triangle Inequality}\\
    &\leq \sum_{t = 1}^{T} \norm{X_{t}}_{\bar{V}_{t - 1}^{-1}} +  \sum_{t = 1}^{T}\frac{1 + G_{*} + \tau_{t}}{2}\,\norm{X_{t}}_{\bar{V}_{t - 1}^{-1}}^{2} \label{equation: intermediate}
\end{align}
where the final inequality follows from Lemma \ref{lemma: product elliptical potential}. The above reveals that we must bound the number of missing rewards at the end of the $t$-th round and the delay, which we do in the following lemmas. 

\begin{restatable}{lemma}{missing}\label{lemma: number missing bound}
Define $G_{t} = \sum_{s = 1}^{t}\mathds{1}\{s + \tau_{s} > t\}$ and let $\{\tau_{t}\}_{t = 1}^{\infty}$ be a sequence of independent and identically distributed random variables with a finite expectation and define: 
\begin{equation*}
    \psi_{\tau}^{t} \coloneqq \frac{4}{3}\log\left(\frac{3t}{2\delta}\right)+ 2\sqrt{2\mathbb{E}\left[\tau\right]\log\left(\frac{3t}{2\delta}\right)}.
\end{equation*}
Then, $$\mathbb{P}\left(\exists\, t \geq 1: G_{t} \leq   \mathbb{E}[\tau] + \psi_{\tau}^{t}\right) \leq 1 - \delta.$$
\end{restatable}
\begin{proof}
The proof follows the same arguments used in multi-armed bandits \citep{Joulani2013}. However, we include a simple extension to accommodate for continuous delay distributions. See Appendix \ref{section: technical lemmas}.
\end{proof}

\begin{lemma}\label{lemma: delay tail bound}
Let $\{\tau_{t}\}_{t = 1}^{\infty}$ satisfy Assumption \ref{assumption: subexponential delays} and define:
\begin{equation*}
D_{\tau}^{t}  = \min\left\{\sqrt{2 v^2 \log\left(\frac{3t}{2\delta}\right)}, 2b \log\left(\frac{3t}{2\delta}\right)\right\}
\end{equation*}
Then,
$$
\mathbb{P}\left(\exists\, t \geq 1: \tau_{t} \leq \mathbb{E}\left[\tau\right] + D_{\tau}^{t} \right) \leq 1 - \delta
$$
\end{lemma}
\begin{proof}
The above follows from a standard tail bound for subexponential random variables \citep{Wainwright2019} and a union bound.
\end{proof}

Applying Lemmas \ref{lemma: number missing bound} and \ref{lemma: delay tail bound}, combined with the observation that $\psi_{\tau} \coloneqq \psi_{\tau}^{T} \geq \psi_{\tau}^{t}$ and $D_{\tau} \coloneqq D_{\tau}^{T} \geq D_{\tau}^t$ for all $t \leq T$ allows us to bound the delays and the maximum number of missing rewards in Equation \eqref{equation: intermediate} with high probability. By setting $D_{\tau}^{+} = 1 + 2\mathbb{E}[\tau] + D_{\tau} + \psi_{\tau}$, we have that:  
\begin{align*}
    \eqref{equation: intermediate} &\leq \sum_{t = 1}^{T}\norm{X_{t}}_{\bar{V}_{t - 1}^{-1}} + \frac{D_{\tau}^{+}}{2}\sum_{t = 1}^{T}\norm{X_{t}}_{\bar{V}_{t - 1}^{-1}}^{2}\\
    &\leq \sqrt{T\sum_{t = 1}^{T} \norm{X_{t}}_{\bar{V}_{t - 1}^{-1}}^{2}} + \frac{D_{\tau}^{+}}{2}\sum_{t = 1}^{T}\norm{X_{t}}_{\bar{V}_{t - 1}^{-1}}^{2}
\end{align*}
with probability $1 - 2\delta$, where the final inequality follows from an application of Cauchy-Schwarz. Now, the total design matrix is incremented by the most recent action at the end of every round. Therefore, we can apply the elliptical potential lemma, which bounds the remaining summation terms as follows: 
\begin{equation*}
    \sum_{t = 1}^{T}\norm{X_{t}}_{\bar{V}_{t - 1}^{-1}}^{2} \leq 2d\log\left(\frac{d\lambda + T}{d\lambda}\right) = 2dL
\end{equation*}
where $L \coloneqq \log((d\lambda + T)/d\lambda)$. For completeness, we provide a statement and proof of this well-known result in Appendix \ref{section: standard results}. Therefore,
\begin{align*}
    &\sum_{t = 1}^{T}\norm{X_{t}}_{\bar{W}_{t - 1}^{-1}} \leq \sqrt{2dTL} +  dL D_{\tau}^{+}.
\end{align*}
From Equation \eqref{equation: terms to bound}, it is clear that all that remains is to upper bound the width of the confidence set at the end of the final round. Recall $\bar{V}_{t} \succeq \bar{W}_{t}$, because the observed design matrix is a partial sum of positive semi-definite matrices that make up the total design matrix. Therefore, 
\begin{align*}
    \sqrt{\beta_{T}} &\leq \sqrt{\lambda}m_{1} + \frac{R}{\kappa}\sqrt{2\log\left(\frac{\abs{\bar{V}_{T}}^{1/2}}{\lambda^{d/2}}\right) + 2\log\left(\frac{1}{\delta}\right)}\\
    &\leq \sqrt{\lambda} m_{1} + \frac{R}{\kappa}\sqrt{2dL + 2\log\left(\frac{1}{\delta}\right)}
\end{align*}
where the inequality follows from Lemma \ref{lemma: trace-determinant} of Appendix \ref{section: standard results}. Bringing everything together, 
\begin{align*}
    \hat{R}_{T} &\leq 2L_{\mu}\sqrt{\beta_{T}}\sum_{t = 1}^{T}\norm{X_{t}}_{\bar{W}_{t - 1}^{-1}}\\
    &\leq 2L_{\mu}\sqrt{\beta_{T}}\left(\sqrt{2dTL} +  dLD_{\tau}^{+}\right)
\end{align*}
Substituting $D_{\tau}^{+} = 1 + 2\mathbb{E}[\tau] + D_{\tau} + \psi_{\tau}$ and our upper bound on $\sqrt{\beta_{T}}$ into the above, and omitting poly-logarithmic factors gives: 
$$
\widetilde{\mathcal{O}}\left(\frac{d R L_{\mu}}{\kappa}\sqrt{T} +  \frac{d^{3/2} RL_{\mu} \left(\mathbb{E}\left[\tau\right] + \min\left\{v, b\right\}\right)}{\kappa} \right)
$$
completing the proof.
\end{proof}


\begin{remark}\label{remark: expected regret}
Under Assumptions \ref{assumption: subgaussian reward} and \ref{assumption: link function}, one can relax the assumption on the delays from subexponential to only requiring a finite expected value if we only consider a weaker notion of regret, namely the expected regret. Formally, for a fixed $\theta^{*}$ and any delay distribution with a finite expected value:
$$
\mathbb{E}\left[\hat{R}_{T}\right] \leq \widetilde{\mathcal{O}}\left(\frac{d R L_{\mu}}{\kappa}\sqrt{T} + \frac{d^{3/2} RL_{\mu}\mathbb{E}[\tau]}{\kappa}\right)
$$
where we take the expectation over the randomness of the rewards and the delays. This result follows from standard arguments; e.g. by setting $\delta = 1/T$ and using the definition of the confidence sets. Eventually, we end up taking the expectation of Equation \eqref{equation: intermediate} with respect to the rewards and delays.
\end{remark}

\begin{remark}
    In the proof, we focused on the confidence sets given in Lemma \ref{lemma: ofu-glm confidence set}. At the heart of this confidence set is a high probability bound on:
    $$
    \norm*{\sum_{s = 1}^{t}\mathds{1}\{s + \tau_s \leq t\} X_s \eta_s}_{\bar{W}_t^{-1}}
    $$
    which we prove is a non-negative supermartingale under the information structure imposed on the learner by the delays. Many other algorithms utilise slightly different techniques to bound an identical term \citep{Filippi2010, Li2017} or one that differs by the choice of weight in the norm \citep{Faury20a} to define confidence sets. By Lemma \ref{lemma: ofu-glm supermartingale}, Algorithm \ref{algorithm: delayed ofu-glm} ensures that these confidence sets are valid in the delayed feedback setting too. Thus, combining our theoretical results within their analyses will yield a similar additive delay-dependent quantity in the regret bounds under delayed feedback.
\end{remark}

\section{Experimental Results}
We conduct simulated experiments to empirically investigate the impact of delayed feedback in  Linear and Logistic bandits. We compare our algorithmic ideas to other approaches for the setting of delayed feedback in generalised linear bandits, which inflate the exploration bonus by the number of missing rewards \citep{Blanchet2020}.\\

In our experiments, we consider $d \in \{5, 10, 20\}$ and fix $T = 100,000$. At the start of the simulations, we randomly sample $\theta^{*}$ from the unit ball for the Linear and Logistic bandit environments so that it remains fixed across each independent run of our experiments. The decision set in each round is a random sample of $K = 100$ actions from the unit ball. We choose the confidence parameter for each algorithm so that the theoretical guarantees hold with probability $0.95$ by setting $\delta = 0.05/3$. All results are averaged over $30$ independent runs and the shaded region in all figures represent the standard errors of the estimates.\\

We consider several delay distributions to investigate the impact of the delays on the performance of each algorithm, namely: 
\begin{itemize}
    \item Exponential$(\lambda)$ with $\lambda = 1/\mathbb{E}[\tau]$,
    \item Uniform$(a, b)$ with $a = 0$ and $b = 2\mathbb{E}[\tau]$,
    \item Pareto$(a, x_m = 1)$ with $a = (1 + \mathbb{E}[\tau])/\mathbb{E}[\tau]$.
\end{itemize}

For each delay distribution, we consider expected values of  $\mathbb{E}[\tau] = \{100, 250, 500, 1000\}$. Notably, Assumption \ref{assumption: subexponential delays} holds for the uniform and exponential distributions. However, it does not hold for the Pareto distribution. \citet{Blanchet2020} make a similar subexponential assumption on the delays, meaning that their theoretical guarantees do not hold for Pareto delays either.\\ 

\begin{figure*}[h!]
    \centering
    \includegraphics[width = \textwidth]{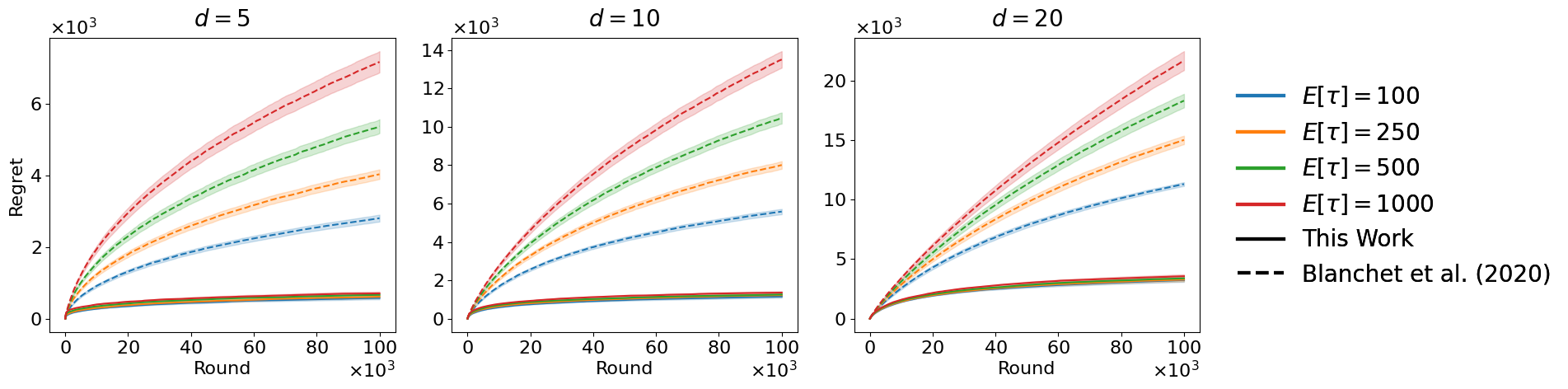}
    \caption{Linear Bandit \& Exponentially Distributed Delays.}
    \label{figure: linear bandit}
\end{figure*}

\begin{figure*}[h!]
    \centering
    \includegraphics[width = \textwidth]{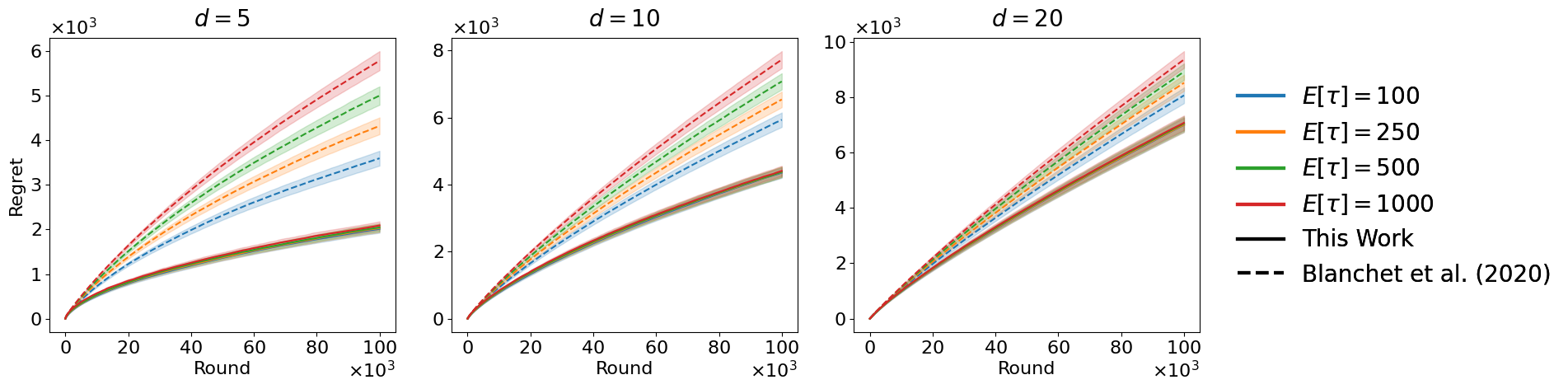}
    \caption{Logistic Bandit \& Exponentially Distributed Delays.}
    \label{figure: logistic bandit}
\end{figure*}

Figures \ref{figure: linear bandit} and \ref{figure: logistic bandit} illustrate the results of our experiments for exponentially distributed delays. Appendix \ref{sec: additional experiments} shows similar results for the other delay distributions and expected delays considered. The empirical results show that our approach out-performs existing algorithms designed for the same problem setting. These results are consistent with the theoretical guarantees, where the delayed feedback causes an additive penalty for our algorithm and a larger multiplicative penalty for the approach of \citet{Blanchet2020}.\\

\begin{figure*}[h!]
    \centering
    \includegraphics[width = \textwidth]{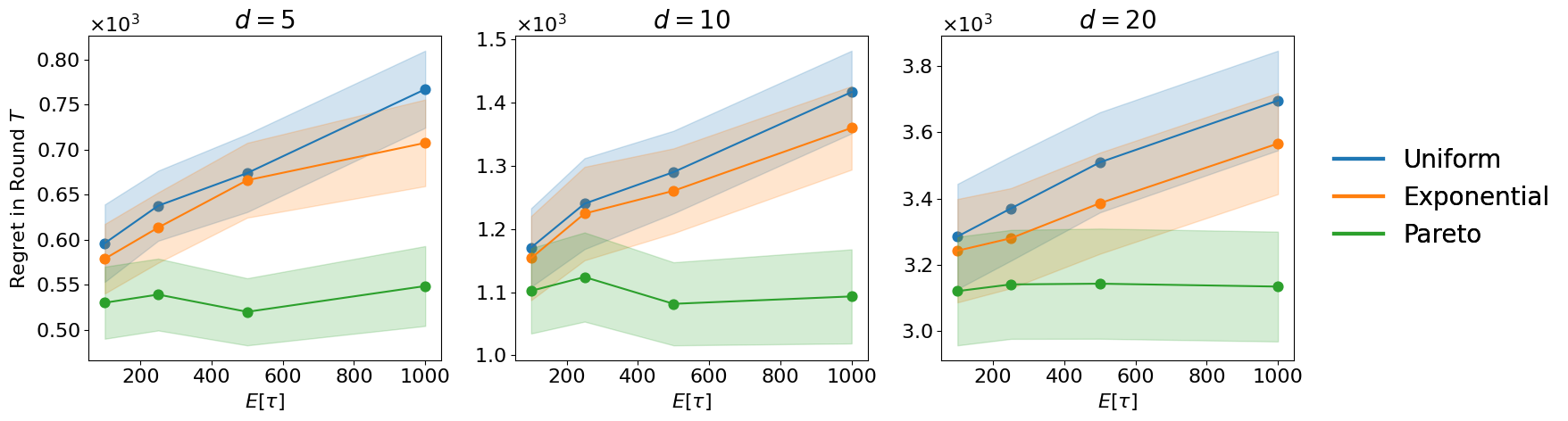}
    \caption{Final Round Regret vs. Expected Delay in Linear Bandits.}
    \label{figure: linear bandit delay}
\end{figure*}

\begin{figure*}[h!]
    \centering
    \includegraphics[width = \textwidth]{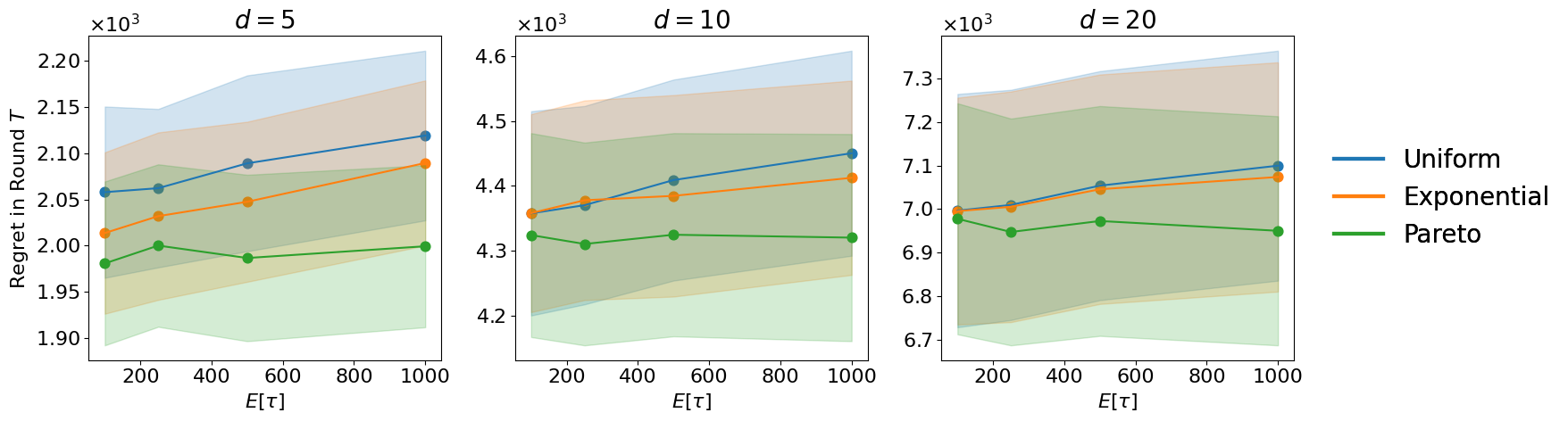}
    \caption{Final Round Regret vs. Expected Delay in Logistic Bandits.}
    \label{figure: logistic bandit delay}
\end{figure*}

Figures \ref{figure: linear bandit delay} and \ref{figure: logistic bandit delay} show the regret at the end of the final round as a function of the expected delay for our algorithm. Although Assumption \ref{assumption: subexponential delays} does not hold for delays drawn from the Pareto distribution, our algorithm still provides good performance for the various values of $\mathbb{E}[\tau]$ considered by our experiments. Notably, these empirical results are consistent with the expected regret guarantee stated in Remark \ref{remark: expected regret}, which only requires that the delays have a finite expected value. Our experiments also suggest that the penalty for Pareto delays is lesser than the other distributions under investigation. This observation may be due to our particular parameterisation of the Pareto distribution producing many small delays; indeed, $\mathbb{P}(\tau_t \leq 20) \geq 0.95$ for all the expected delays considered by our experiments. The same is not true for the other distributions.

\section{Conclusion}

In this work, we studied the impact of delayed feedback on algorithms for generalised linear bandits. Under Assumption \ref{assumption: subexponential delays}, that the delays are subexponential random variables, we designed an optimistic algorithm whose worst-case regret bound increases by an additive term involving the expected delay. We obtain a similar result for the expected regret, which only requires that the delays have a finite expected value.\\

These theoretical results significantly improve on prior work, where existing algorithms suffer a multiplicative penalty and require a-priori knowledge of the delay distribution as input. Reducing the delay dependence from multiplicative to additive was possible by introducing a novel technique to carefully separate the delays from the difficulty of the learning problem. Doing so allowed us to define tighter confidence sets than existing algorithms, leading to better theoretical guarantees and superior empirical performance. Indeed, the theoretical techniques introduced in this paper might be useful in other bandit problems with complex feedback structures.\\

Our result nearly recovers the additive delay penalty from multi-armed bandits, despite the additional difficulties of our setting. Whether or not it is possible to remove the $d$-dependence entirely remains an interesting open question. Another open question relates to relaxing our assumptions on the delays. Namely, can we get high probability bounds that only require that the delays have a finite expected value? We anticipate that addressing these open questions may require adjustments to our theoretical techniques or different algorithmic approaches.\\

Finally, we expect that similar results hold for a Thompson Sampling version of our algorithm. Combining techniques found in \cite{Russo2014} with those in this paper will likely give similar guarantees for the Bayesian regret.

\bibliographystyle{plainnat}
\bibliography{references} 

\begin{thebibliography}{24}
\providecommand{\natexlab}[1]{#1}
\providecommand{\url}[1]{\texttt{#1}}
\expandafter\ifx\csname urlstyle\endcsname\relax
  \providecommand{\doi}[1]{doi: #1}\else
  \providecommand{\doi}{doi: \begingroup \urlstyle{rm}\Url}\fi

\bibitem[Abbasi-Yadkori et~al.(2011)Abbasi-Yadkori, P\'{a}l, and
  Szepesv\'{a}ri]{Abbasi-Yadkori2011}
Yasin Abbasi-Yadkori, D\'{a}vid P\'{a}l, and Csaba Szepesv\'{a}ri.
\newblock {Improved Algorithms for Linear Stochastic Bandits}.
\newblock In \emph{Advances in Neural Information Processing Systems},
  volume~24. Curran Associates, Inc., 2011.

\bibitem[Blanchet et~al.(2020)Blanchet, Xu, and Zhou]{Blanchet2020}
Jose Blanchet, Renyuan Xu, and Zhengyuan Zhou.
\newblock {Delay-Adaptive Learning in Generalized Linear Contextual Bandits}.
\newblock \emph{arXiv}, abs/2003.05174, 2020.

\bibitem[Carpentier et~al.(2020)Carpentier, Vernade, and
  Abbasi-Yadkori]{Carpentier2020}
Alexandra Carpentier, Claire Vernade, and Yasin Abbasi-Yadkori.
\newblock {The Elliptical Potential Lemma Revisited}.
\newblock \emph{arXiv}, abs/2010.10182, 2020.

\bibitem[Chapelle(2014)]{Chapelle2014}
Olivier Chapelle.
\newblock {Modeling Delayed Feedback in Display Advertising}.
\newblock In \emph{Proceedings of the 20th ACM SIGKDD International Conference
  on Knowledge Discovery and Data Mining}, KDD '14, page 1097–1105.
  Association for Computing Machinery, 2014.

\bibitem[Chu et~al.(2011)Chu, Li, Reyzin, and Schapire]{Chu2011}
Wei Chu, Lihong Li, Lev Reyzin, and Robert Schapire.
\newblock {Contextual Bandits with Linear Payoff Functions}.
\newblock In \emph{Proceedings of the Fourteenth International Conference on
  Artificial Intelligence and Statistics}, volume~15 of \emph{Proceedings of
  Machine Learning Research}, pages 208--214. PMLR, 2011.

\bibitem[de~la Peña et~al.(2004)de~la Peña, Klass, and Lai]{Pena2004}
Victor~H. de~la Peña, Michael~J. Klass, and Tze~Leung Lai.
\newblock {Self-normalized processes: exponential inequalities, moment bounds
  and iterated logarithm laws}.
\newblock \emph{The Annals of Probability}, 32\penalty0 (3):\penalty0 1902 --
  1933, 2004.

\bibitem[Dudik et~al.(2011)Dudik, Hsu, Kale, Karampatziakis, Langford, Reyzin,
  and Zhang]{Dudik2011}
Miroslav Dudik, Daniel Hsu, Satyen Kale, Nikos Karampatziakis, John Langford,
  Lev Reyzin, and Tong Zhang.
\newblock {Efficient Optimal Learning for Contextual Bandits}.
\newblock In \emph{Proceedings of the Twenty-Seventh Conference on Uncertainty
  in Artificial Intelligence}, UAI'11, page 169–178. AUAI Press, 2011.

\bibitem[Durand et~al.(2018)Durand, Achilleos, Iacovides, Strati, Mitsis, and
  Pineau]{Durand2018a}
Audrey Durand, Charis Achilleos, Demetris Iacovides, Katerina Strati,
  Georgios~D. Mitsis, and Joelle Pineau.
\newblock {Contextual Bandits for Adapting Treatment in a Mouse Model of de
  Novo Carcinogenesis}.
\newblock In \emph{Proceedings of the 3rd Machine Learning for Healthcare
  Conference}, volume~85 of \emph{Proceedings of Machine Learning Research},
  pages 67--82. PMLR, 2018.

\bibitem[Faury et~al.(2020)Faury, Abeille, Calauzenes, and Fercoq]{Faury20a}
Louis Faury, Marc Abeille, Clement Calauzenes, and Olivier Fercoq.
\newblock {Improved Optimistic Algorithms for Logistic Bandits}.
\newblock In \emph{Proceedings of the 37th International Conference on Machine
  Learning}, volume 119 of \emph{Proceedings of Machine Learning Research},
  pages 3052--3060. PMLR, 2020.

\bibitem[Filippi et~al.(2010)Filippi, Cappe, Garivier, and
  Szepesv\'{a}ri]{Filippi2010}
Sarah Filippi, Olivier Cappe, Aur\'{e}lien Garivier, and Csaba Szepesv\'{a}ri.
\newblock {Parametric Bandits: The Generalized Linear Case}.
\newblock In \emph{Advances in Neural Information Processing Systems},
  volume~23. Curran Associates, Inc., 2010.

\bibitem[Han and Arndt(2021)]{Lyft2021}
Benjamin Han and Carl Arndt.
\newblock {Budget Allocation as a Multi-Agent System of Contextual \&
  Continuous Bandits}.
\newblock In \emph{Proceedings of the 27th ACM SIGKDD Conference on Knowledge
  Discovery \& Data Mining}, KDD '21, page 2937–2945. Association for
  Computing Machinery, 2021.

\bibitem[Joulani et~al.(2013)Joulani, Gyorgy, and Szepesv\'{a}ri]{Joulani2013}
Pooria Joulani, Andras Gyorgy, and Csaba Szepesv\'{a}ri.
\newblock {Online Learning under Delayed Feedback}.
\newblock In \emph{Proceedings of the 30th International Conference on Machine
  Learning}, volume~28 of \emph{Proceedings of Machine Learning Research},
  pages 1453--1461. PMLR, 2013.

\bibitem[Jun et~al.(2017)Jun, Bhargava, Nowak, and Willett]{Jun2017}
Kwang-Sung Jun, Aniruddha Bhargava, Robert Nowak, and Rebecca Willett.
\newblock {Scalable Generalized Linear Bandits: Online Computation and
  Hashing}.
\newblock In \emph{Advances in Neural Information Processing Systems},
  volume~30. Curran Associates, Inc., 2017.

\bibitem[Li et~al.(2017)Li, Lu, and Zhou]{Li2017}
Lihong Li, Yu~Lu, and Dengyong Zhou.
\newblock {Provably Optimal Algorithms for Generalized Linear Contextual
  Bandits}.
\newblock In \emph{Proceedings of the 34th International Conference on Machine
  Learning}, volume~70 of \emph{Proceedings of Machine Learning Research},
  pages 2071--2080. PMLR, 2017.

\bibitem[Mandel et~al.(2015)Mandel, Liu, Brunskill, and Popović]{Mandel2015}
Travis Mandel, Yun-En Liu, Emma Brunskill, and Zoran Popović.
\newblock {The Queue Method: Handling Delay, Heuristics, Prior Data, and
  Evaluation in Bandits}.
\newblock \emph{Proceedings of the AAAI Conference on Artificial Intelligence},
  29\penalty0 (1), 2015.

\bibitem[McInerney et~al.(2018)McInerney, Lacker, Hansen, Higley, Bouchard,
  Gruson, and Mehrotra]{Spotify2018}
James McInerney, Benjamin Lacker, Samantha Hansen, Karl Higley, Hugues
  Bouchard, Alois Gruson, and Rishabh Mehrotra.
\newblock {Explore, Exploit, and Explain: Personalizing Explainable
  Recommendations with Bandits}.
\newblock In \emph{Proceedings of the 12th ACM Conference on Recommender
  Systems}, RecSys '18, page 31–39. Association for Computing Machinery,
  2018.

\bibitem[Misra et~al.(2019)Misra, Schwartz, and Abernethy]{Misra2019}
Kanishka Misra, Eric Schwartz, and Jacob Abernethy.
\newblock {Dynamic Online Pricing with Incomplete Information Using Multi-Armed
  Bandit Experiments}.
\newblock \emph{Marketing Science}, 38, 2019.

\bibitem[Pike-Burke et~al.(2018)Pike-Burke, Agrawal, Szepesvari, and
  Grunewalder]{Pike-Burke2018}
Ciara Pike-Burke, Shipra Agrawal, Csaba Szepesvari, and Steffen Grunewalder.
\newblock {Bandits with Delayed, Aggregated Anonymous Feedback}.
\newblock In \emph{Proceedings of the 35th International Conference on Machine
  Learning}, volume~80 of \emph{Proceedings of Machine Learning Research},
  pages 4105--4113. PMLR, 2018.

\bibitem[Russo and Van~Roy(2014)]{Russo2014}
Daniel Russo and Benjamin Van~Roy.
\newblock {Learning to Optimize via Posterior Sampling}.
\newblock \emph{Mathematics of Operations Research}, 39\penalty0 (4):\penalty0
  1221--1243, 2014.

\bibitem[Shen et~al.(2015)Shen, Wang, Jiang, and Zha]{Shen2015}
Weiwei Shen, Jun Wang, Yu-Gang Jiang, and Hongyuan Zha.
\newblock {Portfolio Choices with Orthogonal Bandit Learning}.
\newblock In \emph{Proceedings of the 24th International Conference on
  Artificial Intelligence}, IJCAI'15, page 974–980. AAAI Press, 2015.

\bibitem[Vernade et~al.(2017)Vernade, Capp{\'e}, and Perchet]{Vernade2017}
Claire Vernade, Olivier Capp{\'e}, and Vianney Perchet.
\newblock {Stochastic Bandit Models for Delayed Conversions}.
\newblock In \emph{{Conference on Uncertainty in Artificial Intelligence}},
  2017.

\bibitem[Vernade et~al.(2020)Vernade, Carpentier, Lattimore, Zappella, Ermis,
  and Br{\"u}ckner]{Vernade2020}
Claire Vernade, Alexandra Carpentier, Tor Lattimore, Giovanni Zappella, Beyza
  Ermis, and Michael Br{\"u}ckner.
\newblock {Linear Bandits with Stochastic Delayed Feedback}.
\newblock In \emph{Proceedings of the 37th International Conference on Machine
  Learning}, volume 119 of \emph{Proceedings of Machine Learning Research},
  pages 9712--9721. PMLR, 2020.

\bibitem[Wainwright(2019)]{Wainwright2019}
Martin~J. Wainwright.
\newblock \emph{{High-Dimensional Statistics: A Non-Asymptotic Viewpoint}}.
\newblock Cambridge Series in Statistical and Probabilistic Mathematics.
  Cambridge University Press, 2019.

\bibitem[Zhou et~al.(2019)Zhou, Xu, and Blanchet]{Zhou2019}
Zhengyuan Zhou, Renyuan Xu, and Jose Blanchet.
\newblock {Learning in Generalized Linear Contextual Bandits with Stochastic
  Delays}.
\newblock In \emph{Advances in Neural Information Processing Systems},
  volume~32. Curran Associates, Inc., 2019.

\end{thebibliography}

\newpage
\begin{appendix}
\section{Confidence Sets}\label{section: confidence sets}
Here, we show that the confidence sets are valid under delayed feedback. To that end, we define the following $\sigma$-algebra:
\begin{equation}\label{equation: sigma-algebra}
    \mathcal{F}_{t - 1} = \sigma \left(\left\{\left(X_s, C_s^{t - 1}, Y_s C_s^{t - 1}\,\right): s \leq t - 1\right\}\,\cup \mathcal{A}_{t}\right)
\end{equation}
Consequently, $Y_{t}$ is $\mathcal{F}_{t}$-measurable. Further, $X_{t}$ is $\mathcal{F}_{t - 1}$-measurable. For notational purposes, we find it useful to define:
\begin{equation*}
    g_{t}\left(\theta\right) = \alpha \,a(\phi)\, \theta +  \sum_{s = 1}^{t} \mathds{1}\left\{s + \tau_{s} \leq t\right\} \mu\left(\langle X_{t}, \theta\,\rangle\right)X_{s}
\end{equation*}
as well as the second derivative of the negative log likelihood: 
\begin{equation*}
    H_{t}\left(\theta\right) = \alpha\,a(\phi) + \sum_{s = 1}^{t} \mathds{1}\left\{s + \tau_{s} \leq t\right\} \dot{\mu}\left(\langle X_{s}, \theta\,\rangle\right)X_{s}X_{s}^{T}
\end{equation*}

Now, $\hat{\theta}_{t}$ is the vector satisfying the following equality:
\begin{align}
    \frac{\partial\mathcal{L}_{t}\left(\theta, \alpha\right)}{\partial \theta} &= \left[\sum_{s = 1}^{t} \mathds{1}\left\{s + \tau_{s} \leq t\right\} \left(Y_{s} - \mu\left(X_{t}^{T} \theta\right)\right) X_{s}\right] - \alpha \,a(\phi)\, \theta\nonumber\\
    &= \left[\sum_{s = 1}^{t}\mathds{1}\left\{s + \tau_{s} \leq t\right\} Y_{s}X_{s}\right] - g_{t}\left(\theta\right)\nonumber\\
    &= 0\label{equation: estimator}
\end{align}

\confset*
\begin{proof}
By Lemmas \ref{lemma: mean value theorem} and  \ref{lemma: separate bias contribution} of Appendix \ref{section: confidence set supporting lemmas}, we have that: $$
\norm{\hat{\theta}_{t} - \theta^{*}}_{\bar{W}_{t}} \leq \sqrt{\lambda}\norm{\theta^{*}}_{2} + \frac{1}{\kappa}\norm{S_{t}}_{\bar{W}_{t}^{-1}}
$$
where 
$$
S_{t} = \sum_{s = 1}^{t}\mathds{1}\left\{s + \tau_{s} \leq t\right\} X_{s}\eta_{s}
$$

Further Lemma \ref{lemma: ofu-glm supermartingale} of \ref{section: confidence set supporting lemmas} reveals that the last term in the above is a non-negative supermartingale under the delayed feedback information structure. Therefore, we are able to use known methods for bounding self-normalised vector-valued martingales \citep{Pena2004, Abbasi-Yadkori2011}. Let $\omega$ be a stopping time with respect to the filtration. Applying Lemma 9 of \cite{Abbasi-Yadkori2011} to the stopped martingale gives:
\begin{align*}
    \mathbb{P}\left( \norm{S_{\omega}}_{\bar{W}_{\omega}^{-1}} \geq R\sqrt{\log\left(\frac{\det\left(\bar{W}_{\omega}\right)}{\lambda^{d}}\right) + 2\log\left(\frac{1}{\delta}\right)}\,\right)
    \leq \delta
\end{align*}

Since Lemma \ref{lemma: ofu-glm supermartingale} guarantees that the stopped supermartingale is well-defined, regardless of whether the stopping time is finite, the above inequality holds across all rounds without the need for a union bound. That is: 
\begin{equation}\label{equation: glm-ucb martingale bound}
    \mathbb{P}\left(\exists\, t \geq 0: \norm{S_{t}}_{\bar{W}_{t}^{-1}} \geq R\sqrt{\log\left(\frac{\det\left(\bar{W}_{t}\right)}{\lambda^{d}}\right) + 2\log\left(\frac{1}{\delta}\right)}\,\right)
    \leq \delta
\end{equation}
Therefore, with probability at least $1 - \delta$:
\begin{align*}
    \norm{\hat{\theta}_{t} - \theta^{*}}_{\bar{W}_{t}} &\leq \sqrt{\lambda}\norm{\theta^{*}}_{2} + \frac{1}{\kappa}\norm{ S_{t}}_{\bar{W}_{t}^{-1}}\tag{Lemmas \ref{lemma: mean value theorem} \& \ref{lemma: separate bias contribution}}\\
    &\leq \sqrt{\lambda}\norm{\theta^{*}}_{2} + \frac{R}{\kappa}\sqrt{\log\left(\frac{\det\left(\bar{W}_{t}\right)}{\lambda^{d}}\right) + 2\log\left(\frac{1}{\delta}\right)}
\end{align*}
as required. 
\end{proof}

\subsection{Supporting Lemmas}\label{section: confidence set supporting lemmas}
Proving Lemma \ref{lemma: ofu-glm confidence set} requires several supporting lemmas. Firstly, the confidence sets are in terms of $\hat{\theta}_{t}$ and $\theta^{*}$. Conversely, Equation \eqref{equation: estimator} reveals that our estimation procedure involves $g_{t}(\hat{\theta})$ and $g_{t}(\theta^{*})$. The following lemma allowed us to relate these two quantities to one another. 

\begin{lemma}\label{lemma: mean value theorem}
Let $\theta_{1}\in\mathbb{R}^{d}$ and $\theta_{2}\in\mathbb{R}^{d}$ be arbitrary vectors, and $\lambda = \alpha a(\phi)/\kappa$. Then, the following inequality holds:
\begin{equation*}
    \kappa \,\norm{\theta_{1} - \theta_{2}}_{\bar{W}_{t}} \leq \norm{g\left(\theta_{1}\right) - g\left(\theta_{2}\right)}_{\bar{W}_{t}^{-1}}
\end{equation*}
\end{lemma}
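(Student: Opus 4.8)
The plan is to prove the inequality by a mean value (fundamental theorem of calculus) argument combined with a generalised Cauchy–Schwarz step, mirroring the classical GLM bandit analysis of \citet{Filippi2010, Li2017} but with the observed design matrix $\bar{W}_{t}$ in place of the full one. First I would observe that $g_{t}$ is continuously differentiable (Assumption \ref{assumption: link function} gives $\mu \in C^{1}$), with Jacobian equal to $H_{t}(\theta) = \alpha\,a(\phi)\,I + \sum_{s = 1}^{t} \mathds{1}\{s + \tau_{s} \leq t\}\,\dot{\mu}(\langle X_{s}, \theta\rangle)\, X_{s} X_{s}^{T}$. Applying the fundamental theorem of calculus along the segment $\theta_{v} = \theta_{2} + v(\theta_{1} - \theta_{2})$, $v \in [0, 1]$, yields
$$
g_{t}(\theta_{1}) - g_{t}(\theta_{2}) = \left(\int_{0}^{1} H_{t}(\theta_{v})\,dv\right)(\theta_{1} - \theta_{2}) =: \bar{H}\,(\theta_{1} - \theta_{2}).
$$

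Next I would lower-bound $\bar{H}$ in the Loewner order. Since $\dot{\mu}(\langle X_{s}, \theta_{v}\rangle) \geq \kappa$ for every $s$ and every $v \in [0,1]$ by Assumption \ref{assumption: link function}, each integrand satisfies $H_{t}(\theta_{v}) \succeq \alpha\,a(\phi)\,I + \kappa \sum_{s = 1}^{t} \mathds{1}\{s + \tau_{s} \leq t\}\, X_{s} X_{s}^{T}$, and integrating over $v$ preserves this. Using $\lambda = \alpha\,a(\phi)/\kappa$, so that $\alpha\,a(\phi) = \kappa\lambda$, the right-hand side equals $\kappa\left(\lambda I + \sum_{s = 1}^{t} \mathds{1}\{s + \tau_{s} \leq t\}\, X_{s}X_{s}^{T}\right) = \kappa\,\bar{W}_{t}$, whence $\bar{H} \succeq \kappa\,\bar{W}_{t}$; in particular $\bar{H}$ is positive definite and invertible.

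Finally I would combine the two facts. By the generalised Cauchy–Schwarz inequality $\langle a, b\rangle \leq \norm{a}_{\bar{W}_{t}}\,\norm{b}_{\bar{W}_{t}^{-1}}$ applied with $a = \theta_{1} - \theta_{2}$ and $b = g_{t}(\theta_{1}) - g_{t}(\theta_{2}) = \bar{H}(\theta_{1} - \theta_{2})$,
$$
(\theta_{1} - \theta_{2})^{T} \bar{H}\, (\theta_{1} - \theta_{2}) \;\leq\; \norm{\theta_{1} - \theta_{2}}_{\bar{W}_{t}}\,\norm{g_{t}(\theta_{1}) - g_{t}(\theta_{2})}_{\bar{W}_{t}^{-1}},
$$
while the left-hand side is at least $\kappa\,\norm{\theta_{1} - \theta_{2}}_{\bar{W}_{t}}^{2}$ by the Loewner bound of the previous step. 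If $\theta_{1} = \theta_{2}$ the claim is trivial; otherwise dividing through by $\norm{\theta_{1} - \theta_{2}}_{\bar{W}_{t}} > 0$ gives exactly $\kappa\,\norm{\theta_{1} - \theta_{2}}_{\bar{W}_{t}} \leq \norm{g_{t}(\theta_{1}) - g_{t}(\theta_{2})}_{\bar{W}_{t}^{-1}}$.

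The argument is largely routine; the only points requiring care are (i) verifying that differentiation under the finite sum produces the stated Jacobian $H_{t}$ and that the $C^{1}$-regularity of $\mu$ legitimises the integral form of the increment, and (ii) keeping the regularisation bookkeeping consistent, i.e. checking that the choice $\lambda = \alpha\,a(\phi)/\kappa$ is precisely what converts the crude bound $H_{t}(\theta) \succeq \alpha\,a(\phi) I + \kappa(\bar{W}_{t} - \lambda I)$ into the clean statement $\bar{H} \succeq \kappa\,\bar{W}_{t}$. I do not anticipate a genuine obstacle beyond this bookkeeping: the delay indicators $\mathds{1}\{s + \tau_{s} \leq t\}$ occur identically in $g_{t}$, $H_{t}$ and $\bar{W}_{t}$, so the delayed-feedback structure introduces no extra difficulty in this lemma.
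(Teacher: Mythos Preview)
Your proof is correct and follows essentially the same route as the paper: linearise $g_t(\theta_1)-g_t(\theta_2)$ via a mean-value argument, use $\dot\mu\geq\kappa$ together with $\lambda=\alpha a(\phi)/\kappa$ to obtain the Loewner bound $\succeq\kappa\bar W_t$, and then pass to norms. The only cosmetic differences are that the paper invokes the scalar mean value theorem componentwise (writing a single $\bar\theta$) and finishes with the chain $\norm{\theta_1-\theta_2}_{H_t}=\norm{g_t(\theta_1)-g_t(\theta_2)}_{H_t^{-1}}\leq\norm{g_t(\theta_1)-g_t(\theta_2)}_{\kappa^{-1}\bar W_t^{-1}}$, whereas you use the integral form and a single Cauchy--Schwarz step; your version is arguably tidier and avoids the mild abuse of a common intermediate point.
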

\begin{proof}
Similarly to \citet{Filippi2010}, we apply the mean value theorem to the terms inside the norm on the right-hand side of the above, which allows us to related them to the original vectors. Expanding $g\left(\theta_{1}\right)$ and $g\left(\theta_{2}\right)$ reveals that: 
\begin{align}
    g\left(\theta_{1}\right) - g\left(\theta_{2}\right) &= \alpha \,a(\phi)\,\theta_{1} - \alpha \,a(\phi)\, \theta_{2} +  \sum_{s = 1}^{t} \mathds{1}\left\{s + \tau_{s} \leq t\right\}\left[ \,\mu\left(\langle X_{t}, \theta_{1}\,\rangle\right) - \mu\left(\langle X_{t}, \theta_{2}\,\rangle\right) \right]X_{s}\nonumber\\
    &= \alpha \,a(\phi)\,\theta_{1} - \alpha \,a(\phi)\, \theta_{2} +  \sum_{s = 1}^{t} \mathds{1}\left\{s + \tau_{s} \leq t\right\} \,\dot{\mu}\left(\langle X_{t}, \bar{\theta}\,\rangle\right) X_{s}X_{s}^{T}\left(\theta_{1} - \theta_{2}\right)\nonumber\\
    &= \left[\alpha\,a(\phi) + \sum_{s = 1}^{t} \mathds{1}\left\{s + \tau_{s} \leq t\right\} \,\dot{\mu}\left(\langle X_{t}, \bar{\theta}\,\rangle\right) X_{s}X_{s}^{T}\right]\left(\theta_{1} - \theta_{2}\right)\nonumber\\
    &= H_{t}\left(\bar{\theta}\,\right)\left(\theta_{1} - \theta_{2}\right)\label{equation: mean value theorem}
\end{align} 
where the second equality follows from the mean value theorem for some $\bar{\theta}\in(\theta_{2}, \theta_{1})$. Rewriting the Hessian for some $\theta\in \mathbb{R}^{d}$ and recalling that $\kappa \leq \dot{\mu}(z)$ reveals that: 
\begin{equation}
    H_{t}\left(\theta\right) = \alpha\, a(\phi) + \sum_{s = 1}^{t} \mathds{1}\left\{s + \tau_{s} \leq t\right\} \dot{\mu}\left(\langle X_{s}, \theta\,\rangle\right)X_{s}X_{s}^{T}
    \succeq  
    \kappa \left[\frac{\alpha \,a(\phi)}{\kappa} + \sum_{s = 1}^{t} \mathds{1}\left\{s + \tau_{s} \leq t\right\}X_{s}X_{s}^{T}\right] = \kappa \bar{W}_{t} \label{equation: hessian design relationship}
\end{equation}
From Equation \eqref{equation: hessian design relationship}, we immediately have that $H_{t}^{-1}(\theta) \preceq \bar{W}_{t}^{-1}/\kappa$. Combining Equation \eqref{equation: mean value theorem} with the partial ordering of Equation \eqref{equation: hessian design relationship} gives: 
\begin{align*}
    \norm*{\theta_{1} - \theta_{2}}_{\kappa \bar{W}_{t}} &\leq \norm*{\theta_{1} - \theta_{2}}_{H_{t}\left(\bar{\theta}\right)} \tag{$\kappa \bar{W}_{t} \preceq H_{t}(\theta)$}\\
    &= \norm*{H_{t}^{1/2}\left(\bar{\theta}\right)\left(\theta_{1} - \theta_{2}\right)}_{2} \tag{$\norm{x}_{A} = \norm{A^{1/2} x}_{2}$}\\
    &= \norm*{H_{t}^{-1/2}\left(\bar{\theta}\right)\left(g_{t}\left(\theta_{1}\right) - g_{t}\left(\theta_{2}\right)\right)}_{2} \tag{Equation \eqref{equation: mean value theorem}}\\
    &= \norm*{g_{t}\left(\theta_{1}\right) - g_{t}\left(\theta_{2}\right)}_{H_{t}^{-1}\left(\bar{\theta}\right)}\tag{$\norm{A^{1/2} x}_{2} = \norm{x}_{A}$}\\
    &\leq \norm*{g_{t}\left(\theta_{1}\right) - g_{t}\left(\theta_{2}\right)}_{\frac{1}{\kappa}\bar{W}_{t}^{-1}} \tag{$H_{t}^{-1}(\theta)\preceq \bar{W}_{t}^{-1}/\kappa $}
\end{align*}
Therefore, using homogeneity property of norms on the first and last terms of the above reveals that: 
\begin{equation*}
    \sqrt{\kappa}\norm*{\theta_{1} - \theta_{2}}_{\bar{W}_{t}} = \norm*{\theta_{1} - \theta_{2}}_{\kappa \bar{W}_{t}} \leq \norm*{g_{t}\left(\theta_{1}\right) - g_{t}\left(\theta_{2}\right)}_{\frac{1}{\kappa}\bar{W}_{t}^{-1}} = \frac{1}{\sqrt{\kappa}} \norm*{g_{t}\left(\theta_{1}\right) - g_{t}\left(\theta_{2}\right)}_{\bar{W}_{t}^{-1}}
\end{equation*}
Bringing all $\kappa$'s to the left hand side side gives the stated result.
\end{proof}

\begin{lemma}\label{lemma: separate bias contribution}
Let $\theta^{*}$ be the unknown parameter of the environment and $\hat{\theta}_{t}$ be the solution to \eqref{equation: estimator}. Further, define $\lambda = \alpha\,a(\phi)/\kappa$ and 
$$
S_{t} = \sum_{s = 1}^{t}\mathds{1}\left\{s + \tau_{s} \leq t\right\} X_{s}\eta_{s}
$$
Then, 
\begin{equation*}
    \norm{\hat{\theta}_{t} - \theta^{*}}_{\bar{W}_{t}} \leq \sqrt{\lambda}\norm{\theta^{*}}_{2} + \frac{1}{\kappa}\norm{S_{t}}_{\bar{W}_{t}^{-1}}
\end{equation*}
\end{lemma}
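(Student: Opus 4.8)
The plan is to start from the stationarity condition \eqref{equation: estimator}, which says that $g_{t}(\hat{\theta}_{t}) = \sum_{s=1}^{t}\mathds{1}\{s+\tau_s \le t\}\,Y_s X_s$, and rewrite the right-hand side by substituting $Y_s = \mu(\langle X_s, \theta^{*}\rangle) + \eta_s$. This splits the sum into a deterministic part that reassembles into $g_{t}(\theta^{*})$ minus the regularisation term $\alpha\,a(\phi)\,\theta^{*}$, and a noise part that is exactly $S_t$. Concretely I would show
\begin{equation*}
    g_{t}(\hat{\theta}_{t}) - g_{t}(\theta^{*}) = S_{t} - \alpha\,a(\phi)\,\theta^{*}\,,
\end{equation*}
using the definition of $g_t$ and cancelling the $\mu(\langle X_s,\theta^{*}\rangle)X_s$ terms.

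Next I would apply Lemma~\ref{lemma: mean value theorem} with $\theta_1 = \hat{\theta}_t$ and $\theta_2 = \theta^{*}$ to get $\kappa\,\norm{\hat{\theta}_{t}-\theta^{*}}_{\bar{W}_{t}} \le \norm{g_{t}(\hat{\theta}_{t}) - g_{t}(\theta^{*})}_{\bar{W}_{t}^{-1}}$, then substitute the identity above and invoke the triangle inequality for the norm $\norm{\cdot}_{\bar{W}_t^{-1}}$:
\begin{equation*}
    \kappa\,\norm{\hat{\theta}_{t}-\theta^{*}}_{\bar{W}_{t}} \le \norm{S_{t}}_{\bar{W}_{t}^{-1}} + \alpha\,a(\phi)\,\norm{\theta^{*}}_{\bar{W}_{t}^{-1}}\,.
\end{equation*}
It remains to control the bias term $\alpha\,a(\phi)\,\norm{\theta^{*}}_{\bar{W}_{t}^{-1}}$. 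Since $\bar{W}_{t} = \lambda I + \sum_s \mathds{1}\{s+\tau_s\le t\}X_sX_s^{T} \succeq \lambda I$, we have $\bar{W}_{t}^{-1} \preceq \lambda^{-1} I$, so $\norm{\theta^{*}}_{\bar{W}_{t}^{-1}} \le \lambda^{-1/2}\norm{\theta^{*}}_{2}$. Plugging in $\lambda = \alpha\,a(\phi)/\kappa$ gives $\alpha\,a(\phi)\,\norm{\theta^{*}}_{\bar{W}_{t}^{-1}} \le \alpha\,a(\phi)\,\lambda^{-1/2}\norm{\theta^{*}}_{2} = \kappa\sqrt{\lambda}\,\norm{\theta^{*}}_{2}$, since $\alpha\,a(\phi) = \kappa\lambda$ and hence $\alpha\,a(\phi)/\sqrt{\lambda} = \kappa\sqrt{\lambda}$. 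Dividing through by $\kappa$ yields the claimed bound.

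I do not expect any serious obstacle here: the lemma is essentially bookkeeping that isolates the martingale term $S_t$ (handled separately in Lemma~\ref{lemma: ofu-glm supermartingale}) from the regularisation bias, and the only mildly delicate point is making sure the algebraic identity $g_{t}(\hat{\theta}_{t}) - g_{t}(\theta^{*}) = S_{t} - \alpha\,a(\phi)\,\theta^{*}$ is written out correctly from the definitions, together with the choice $\lambda = \alpha\,a(\phi)/\kappa$ being used consistently so the bias term collapses to $\sqrt{\lambda}\,\norm{\theta^{*}}_2$ after dividing by $\kappa$. Everything else is a direct application of the mean-value-theorem lemma, the triangle inequality, and the crude bound $\bar{W}_t \succeq \lambda I$.
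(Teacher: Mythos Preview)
Your proposal is correct and follows essentially the same route as the paper: apply Lemma~\ref{lemma: mean value theorem}, use the stationarity condition to identify $g_{t}(\hat{\theta}_{t}) - g_{t}(\theta^{*}) = S_{t} - \alpha\,a(\phi)\,\theta^{*}$, split via the triangle inequality, and bound the bias term using $\bar{W}_{t}^{-1}\preceq \lambda^{-1}I$ together with $\alpha\,a(\phi) = \kappa\lambda$. The only difference is cosmetic ordering of the steps.
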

\begin{proof}
By Lemma \ref{lemma: mean value theorem}, we have that: 
\begin{equation}
    \norm{\hat{\theta}_{t} - \theta^{*}}_{\bar{W}_{t}} \leq \frac{1}{\kappa} \,\norm*{\,g\left(\hat{\theta}_{t}\right) - g\left(\theta^{*}\right)}_{\bar{W}_{t}^{-1}}\label{equation: mean value theorem result}
\end{equation}
Since $\hat{\theta}_{t}$ is the solution to \eqref{equation: estimator}, it follows that: 
\begin{align*}
\left[\sum_{s = 1}^{t}\mathds{1}\left\{s + \tau_{s} \leq t\right\} Y_{s}X_{s}\right] - g_{t}\left(\hat{\theta}_{t}\right) = 0 \iff  g_{t}\left(\hat{\theta}_{t}\right) = \left[\sum_{s = 1}^{t}\mathds{1}\left\{s + \tau_{s} \leq t\right\} Y_{s}X_{s}\right] 
\end{align*}

Substituting the above into \eqref{equation: mean value theorem result} gives: 
\begin{align*}
    \norm*{\hat{\theta}_{t} - \theta^{*}}_{\bar{W}_{t}} &\leq \frac{1}{\kappa} \,\norm*{\,g\left(\hat{\theta}_{t}\right) - g\left(\theta^{*}\right)}_{\bar{W}_{t}^{-1}}\\
    &= \frac{1}{\kappa} \,\norm*{\sum_{s = 1}^{t} \mathds{1}\left\{s + \tau_{s} \leq t\right\} Y_{s}X_{s} - g\left(\theta^{*}\right)}_{\bar{W}_{t}^{-1}}\\
    &= \frac{1}{\kappa} \,\norm*{\sum_{s = 1}^{t} \mathds{1}\left\{s + \tau_{s} \leq t\right\} Y_{s}X_{s} - \left[\alpha\,a(\phi)\,\theta^{*} +  \sum_{s = 1}^{t} \mathds{1}\left\{s + \tau_{s} \leq t\right\} \mu\left(\langle X_{t}, \theta^{*}\,\rangle\right)X_{s}\right]}_{\bar{W}_{t}^{-1}}\\
    &= \frac{1}{\kappa} \,\norm*{-\alpha\,a(\phi)\,\theta^{*} +  \sum_{s = 1}^{t} \mathds{1}\left\{s + \tau_{s} \leq t\right\}\left[ \,Y_{s} - \mu\left(\langle X_{s}, \theta^{*}\,\rangle\right)\right]X_{s}}_{\bar{W}_{t}^{-1}}\\
    &= \frac{1}{\kappa} \,\norm*{-\alpha\,a(\phi)\,\theta^{*} +  \sum_{s = 1}^{t} \mathds{1}\left\{s + \tau_{s} \leq t\right\}\left[ \,\mu\left(\langle X_{s}, \theta^{*}\,\rangle\right) + \eta_{s} -  \mu\left(\langle X_{s}, \theta^{*}\,\rangle\right)  \right]X_{s}}_{\bar{W}_{t}^{-1}}\\
    &= \frac{1}{\kappa} \,\norm*{-\alpha\,a(\phi)\,\theta^{*} + \sum_{s = 1}^{t} \mathds{1}\left\{s + \tau_{s} \leq t\right\} X_{s}\eta_{s} }_{\bar{W}_{t}^{-1}} =  \frac{1}{\kappa} \,\norm*{S_{t} - \alpha\phi\theta^{*}}_{\bar{W}_{t}^{-1}}\\
    &\leq \frac{1}{\kappa} \,\norm*{\alpha\,a(\phi)\,\theta^{*}}_{\bar{W}_{t}^{-1}} + \frac{1}{\kappa}\norm{S_{t}}_{\bar{W}_{t}^{-1}}\\
    &\leq \frac{1}{\kappa}\sqrt{\frac{\alpha^2 \,a(\phi)^2}{\lambda}}\norm{\theta^{*}}_{2} + \frac{1}{\kappa}\norm{S_{t}}_{\bar{W}_{t}^{-1}}\\
    &= \sqrt{\lambda}\norm{\theta^{*}}_{2} + \frac{1}{\kappa}\norm{S_{t}}_{\bar{W}_{t}^{-1}}
\end{align*}
where the final inequality follows from the fact that $\bar{W}_{t}^{-1} \preceq\lambda^{-1}I$, and the final equality follows from the fact that $\lambda = \alpha\,a(\phi)/\kappa$.
\end{proof}

All that remains is bounding the norm involving the noise terms with high probability, which is the second term in the result stated in Lemma \ref{lemma: separate bias contribution}. By Fenchel Duality, we have that \citep{Abbasi-Yadkori2011}: 
\begin{align}
    \frac{1}{2}\norm{S_{t}}_{\bar{W}_{t}^{-1}}^{2} &= \max_{x\in \mathbb{R}^{d}}\left\{\langle x, S_{t}\rangle - \frac{1}{2}\norm{x}_{\bar{W}_{t}}^{2}\right\}\nonumber\\
    &= \max_{x\in \mathbb{R}^{d}}\left\{\log\left(\exp\left(\langle x, S_{t}\rangle - \frac{1}{2}\norm{x}_{\bar{W}_{t}}^{2}\right)\right)\right\}\nonumber\\
    &= \log\left(\max_{x\in \mathbb{R}^{d}}\left\{\exp\left(\langle x, S_{t}\rangle - \frac{1}{2}\norm{x}_{\bar{W}_{t}}^{2}\right)\right\}\right)\label{equation: confidence set validity 2}
\end{align}

Equation \eqref{equation: confidence set validity 2} suggests that it would be useful to obtain a high probability bound on the following random variable: 
\begin{equation*}
    M_{t}\left(x\right) = \exp\left(\frac{1}{R}\langle x, S_{t}\rangle - \frac{1}{2}\norm{x}_{W_{t}}^{2}\right)
\end{equation*}
where 
$$
W_{t} = \bar{W}_{t} - \lambda I = \sum_{s = 1}^{t}\mathds{1}\left\{s + \tau_{s} \leq t\right\}X_{s}X_{s}^{T}
$$
for an arbitrary vector $x\in\mathbb{R}^{d}$. To do so, we first establish the following supermartingale argument, which is essential in showing the validity of the confidence sets. Due to the delayed feedback, we cannot directly use results from the immediate feedback setting. Therefore, we make the necessary adjustments to account for the delays.


\begin{lemma}\label{lemma: ofu-glm supermartingale}
Let $x \in \mathbb{R}^{d}$ be an arbitrary vector and define: 
\begin{equation*}
    M_{t}\left(x\right) = \exp\left(\frac{1}{R}\langle x, S_{t}\rangle - \frac{1}{2}\norm{x}_{W_{t}}^{2}\right) = \exp\left(\sum_{s = 1}^{t}\mathds{1}\left\{s + \tau_{s} \leq t\right\}\left(\frac{\langle x, X_{s}\rangle\,\eta_{s}}{R} - \frac{1}{2}\langle x, X_{s}\rangle^{2}\right)\right)
\end{equation*}
Let $\omega$ be a stopping time with respect to the filtration $\{\mathcal{F}_{t}\}_{t = 0}^{\infty}$. Then, $M_{\omega}(x)$ is almost surely well-defined and $\mathbb{E}\left[M_{\omega}(x)\right] \leq 1$.
\end{lemma}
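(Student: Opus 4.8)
The plan is to prove that $\{M_t(x)\}_{t\ge 0}$ is a non-negative supermartingale with respect to $\{\mathcal{F}_t\}$ with $M_0(x)=1$, and then to run the standard stopped-process argument of \citet{Abbasi-Yadkori2011}. Concretely, once the supermartingale property is established, non-negativity together with the martingale convergence theorem gives that $M_t(x)$ converges almost surely to a finite limit $M_\infty(x)$, so $M_\omega(x)$ is well-defined even on the event $\{\omega=\infty\}$; applying the supermartingale inequality to the stopped process $M_{t\wedge\omega}(x)$ yields $\mathbb{E}[M_{t\wedge\omega}(x)]\le M_0(x)=1$ for every $t$, and Fatou's lemma then gives $\mathbb{E}[M_\omega(x)]=\mathbb{E}[\lim_t M_{t\wedge\omega}(x)]\le\liminf_t\mathbb{E}[M_{t\wedge\omega}(x)]\le 1$.

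The substantive step is therefore the supermartingale property. Writing out the telescoping structure, $M_t(x)=M_{t-1}(x)\exp\big(\sum_{s\in\mathcal{N}_t}\big(\tfrac{1}{R}\langle x,X_s\rangle\eta_s-\tfrac{1}{2}\langle x,X_s\rangle^2\big)\big)$, where $\mathcal{N}_t=\{s\le t:\ C_s^{t}=1,\ C_s^{t-1}=0\}$ is the set of rounds whose rewards become observable during round $t$. Since $M_{t-1}(x)$ is $\mathcal{F}_{t-1}$-measurable, it suffices to show that the conditional expectation of the exponential factor given $\mathcal{F}_{t-1}$ is at most one, together with the trivial identity $M_0(x)=\exp(0)=1$.

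The main obstacle, and the only genuine departure from the immediate-feedback analysis, is that $\mathcal{N}_t$ is random given $\mathcal{F}_{t-1}$: which noise terms, and how many, enter at round $t$ depends on the delays $\{\tau_s\}$. I would resolve this by conditioning additionally on the delay sequence, which is independent of the reward noise and hence does not affect the conditional sub-Gaussianity of any $\eta_s$; given the delays, $\mathcal{N}_t$ is a fixed set $N$. I would then peel the noise terms off one index at a time in decreasing order: for the largest $s^*\in N$, the event $C_{s^*}^{t-1}=0$ means $Y_{s^*}$ has not yet been revealed, so none of the conditioning information (past actions, already-observed rewards, the delays, and the noise terms already peeled off) is a function of $\eta_{s^*}$, and Assumption \ref{assumption: subgaussian reward} applies with $\gamma=\langle x,X_{s^*}\rangle/R$ to bound the conditional moment generating function of $\eta_{s^*}$ by $\exp(\tfrac{1}{2}\langle x,X_{s^*}\rangle^2)$, which exactly cancels the deterministic factor $\exp(-\tfrac{1}{2}\langle x,X_{s^*}\rangle^2)$. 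Iterating over all of $N$ bounds the conditional expectation by one, and the tower property (averaging back over the delays) then gives $\mathbb{E}[M_t(x)\mid\mathcal{F}_{t-1}]\le M_{t-1}(x)$. The index $s=t$, when it belongs to $N$, is just the ordinary immediate-feedback contribution and is covered by the same application of Assumption \ref{assumption: subgaussian reward}. Together with $M_0(x)=1$ this establishes the supermartingale claim, and the stopped-process argument above finishes the proof.
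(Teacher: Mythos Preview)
Your argument is correct and follows the same overall route as the paper: verify that $\{M_t(x)\}_{t\ge 0}$ is a non-negative supermartingale with $M_0(x)=1$, then invoke the martingale convergence theorem and Fatou's lemma on the stopped process exactly as in \citet{Abbasi-Yadkori2011}. The one place you diverge from the paper is in handling the increment. The paper writes the multiplicative factor using the indicator $1-C_s^{t-1}$, which is $\mathcal{F}_{t-1}$-measurable, and then applies Assumption~\ref{assumption: subgaussian reward} directly without any further conditioning; this keeps the argument inside the given filtration but, as written, the telescoping identity is loose (the indicators $C_s^{t-1}$ and $1-C_s^{t-1}$ sum to $1$, not to $C_s^t$). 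You instead use the exact increment set $\mathcal{N}_t=\{s:C_s^t=1,\,C_s^{t-1}=0\}$, correctly observe that it is \emph{not} $\mathcal{F}_{t-1}$-measurable, and resolve this by conditioning additionally on the delay sequence and peeling off the fresh noise terms one by one. Your route is tidier on the decomposition side and makes the (elsewhere implicit) independence of delays and reward noise explicit, whereas the paper's route avoids that extra conditioning at the cost of a sloppier identity; either way the supermartingale conclusion holds, and the stopped-process part is identical.
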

\begin{proof}
Recall that $C_s^{t} = \mathds{1}\left\{s + \tau_s \leq t\right\}$. We start by re-writing $M_{t}(x)$ in terms of $M_{t - 1}(x)$:
\begin{align*}
    &M_{t}\left(x\right) = \exp\left(\,\sum_{s = 1}^{t}\mathds{1}\left\{s + \tau_{s} \leq t\right\}\left(\frac{\langle x, X_{s}\rangle\,\eta_{s}}{R} - \frac{1}{2}\langle x, X_{s}\rangle^{2}\right)\right)\\
    &= \exp\left(\,\sum_{s = 1}^{t}C_{s}^{t - 1}\left(\frac{\langle x, X_{s}\rangle\,\eta_{s}}{R} - \frac{1}{2}\langle x, X_{s}\rangle^{2}\right) + \sum_{s = 1}^{t}\mathds{1}\left\{s + \tau_{s} > t - 1\right\}\left(\frac{\langle x, X_{s}\rangle\,\eta_{s}}{R} - \frac{1}{2}\langle x, X_{s}\rangle^{2}\right)\right)\\
    &= \exp\left(\,\sum_{s = 1}^{t}C_s^{t - 1}\left(\frac{\langle x, X_{s}\rangle\,\eta_{s}}{R} - \frac{1}{2}\langle x, X_{s}\rangle^{2}\right)\right) \exp\left(\,\sum_{s = 1}^{t}\mathds{1}\left\{s + \tau_{s} > t - 1\right\}\left(\frac{\langle x, X_{s}\rangle\,\eta_{s}}{R} - \frac{1}{2}\langle x, X_{s}\rangle^{2}\right)\right)\\
    &= \exp\left(\,\sum_{s = 1}^{t - 1}C_s^{t - 1}\left(\frac{\langle x, X_{s}\rangle\,\eta_{s}}{R} - \frac{1}{2}\langle x, X_{s}\rangle^{2}\right)\right) \exp\left(\,\sum_{s = 1}^{t}\mathds{1}\left\{s + \tau_{s} > t - 1\right\}\left(\frac{\langle x, X_{s}\rangle\,\eta_{s}}{R} - \frac{1}{2}\langle x, X_{s}\rangle^{2}\right)\right)\\
    &= M_{t - 1}\left(x\right) \exp\left(\,\sum_{s = 1}^{t}\mathds{1}\left\{s + \tau_{s} > t - 1\right\}\left(\frac{\langle x, X_{s}\rangle\,\eta_{s}}{R} - \frac{1}{2}\langle x, X_{s}\rangle^{2}\right)\right)
\end{align*}
where the penultimate equality follows from the fact that $\mathds{1}\{t + \tau_t \leq t - 1\} = 0$ as the delays are non-negative random variables, allowing us to stop the first summation at round $t - 1$ by pulling the corresponding $\exp(0)$ out of the the summation and utilising that $\exp(a + b) = \exp(a)\exp(b)$.\\

Recall that $C_s^{t - 1} = \mathds{1}\left\{s + \tau_s \leq t - 1\right\}$ is $\mathcal{F}_{t - 1}$-measurable. Since $\mathcal{F}_{t - 1}$ is a $\sigma$-algebra, $\mathds{1}\left\{s + \tau_s > t - 1\right\}$ must also be measurable. Utilising this fact, it is clear that everything except the noise terms are $\mathcal{F}_{t - 1}$-measurable. Assumption \ref{assumption: subgaussian reward} guarantees the noise is subgaussian, therefore: 
\begin{align*}
&\mathbb{E}\left[M_{t}\left(x\right)\,\vert\,\mathcal{F}_{t - 1}\right] =  \mathbb{E}\left[ M_{t - 1}\left(x\right) \,\exp\left(\,\sum_{s = 1}^{t}\left(1 - C_{s}^{t - 1}\right)\left(\frac{\langle x, X_{s}\rangle\,\eta_{s}}{R} - \frac{1}{2}\langle x, X_{s}\rangle^{2}\right)\right) \,\Big\vert\,\mathcal{F}_{t - 1}\right]\\
&= M_{t - 1}\left(x\right)\,\mathbb{E}\left[\exp\left(\,\sum_{s = 1}^{t}\left(1 - C_{s}^{t - 1}\right)\left(\frac{\langle x, X_{s}\rangle\,\eta_{s}}{R} - \frac{1}{2}\langle x, X_{s}\rangle^{2}\right)\right)\,\Big\vert\,\mathcal{F}_{t - 1}\right]\\
&=M_{t - 1}\left(x\right)\,\exp\left(\,-\sum_{s = 1}^{t}\frac{\left(1 - C_{s}^{t - 1}\right)\langle x, X_{s}\rangle^{2}}{2}\right)\,\mathbb{E}\left[\exp\left(\,\sum_{s = 1}^{t}\frac{\left(1 - C_{s}^{t - 1}\right) \langle x, X_{s}\rangle\,\eta_{s}}{R}\right) \,\Big\vert\,\mathcal{F}_{t - 1}\right]\\
&\leq M_{t - 1}\left(x\right)\,\exp\left(\,-\frac{1}{2}\sum_{s = 1}^{t}\left(1 - C_{s}^{t - 1}\right)\langle x, X_{s}\rangle^{2}\right)\, \exp\left(\frac{1}{2}\sum_{s = 1}^{t}\left(1 - C_{s}^{t - 1}\right)\langle x, X_{s} \rangle^{2}\right)\\
    &= M_{t - 1}\left(x\right)
\end{align*}
showing that $\{M_{t}(x)\}_{t = 0}^{\infty}$ is indeed a non-negative supermartingale.\footnote{By definition, the exponential function is always positive. Hence, $M_{t}(x)$ is always non-negative.} For conciseness, denote: 
$$
P_{t} = \exp\left(\,\sum_{s = 1}^{t}\mathds{1}\left\{s + \tau_{s} > t - 1\right\}\left(\frac{\langle x, X_{s}\rangle\,\eta_{s}}{R} - \frac{1}{2}\langle x, X_{s}\rangle^{2}\right)\right)
$$
Then, by the law of  total expectation: 
\begin{align*}
    \mathbb{E}\left[M_{t}\left(x\right)\right] &= \mathbb{E}\left[M_{t - 1}\left(x\right) P_{t}\right] 
    = \mathbb{E}\left[\mathbb{E}\left[M_{t - 1}\left(x\right) P_{t}\,\vert\, \mathcal{F}_{t - 1}\right]\right] 
    = \mathbb{E}\left[M_{t - 1}\left(x\right)\,\mathbb{E}\left[ P_{t}\,\vert\, \mathcal{F}_{t - 1}\right]\right]\\
    &\leq \mathbb{E}\left[M_{t - 1}\left(x\right)\right]
    = \mathbb{E}\left[M_{t - 2}\left(x\right) P_{t - 1}\right]
    = \mathbb{E}\left[M_{t - 2}\left(x\right)\,\mathbb{E}\left[ P_{t - 1}\,\vert\, \mathcal{F}_{t - 2}\right]\right]\\
    &\:\:\,\vdots \\
    &\leq \mathbb{E}\left[M_{1}\left(x\right)\right] = \mathbb{E}\left[\mathbb{E}\left[M_{1}\left(x\right)\,\vert\,\mathcal{F}_{0}\right]\right]\\
    &\leq 1
\end{align*}
where we define $M_{0}(x) = 1$. By the convergence theorem for non-negative supermartingales: 
$$
M_{\infty}^{x} = \lim_{t \rightarrow \infty }M_{t}^{x}
$$
is almost surely well-defined. Hence, $M_{\omega}^{x}$ is almost surely well-defined, regardless of whether the stopping time is finite or not. Now, Fatou's Lemma tells us that: 
\begin{align*}
    \mathbb{E}\left[M_{\omega}\left(x\right)\right] = \mathbb{E}\left[\liminf_{t \rightarrow \infty} M_{\min\left\{t, \omega\right\}}\left(x\right)\right] \leq \liminf_{t \rightarrow \infty} \mathbb{E}\left[M_{\min\left\{t, \omega\right\}}\left(x\right)\right]
\end{align*}

Combining the right hand side of the above with the law of total expectation reveals that for any $t \geq 0$: 
\begin{align*}
    \mathbb{E}\left[M_{\min\left\{t, \omega\right\}}\left(x\right)\right] &= \mathbb{E}\left[\mathbb{E}\left[M_{\min\left\{t, \omega\right\}}\left(x\right)\,\vert\, \mathcal{F}_{t - 1}\right]\right]\\
    &\leq \mathbb{E}\left[M_{\min\left\{t - 1, \omega\right\}}\left(x\right)\right] = \mathbb{E}\left[\mathbb{E}\left[M_{\min\left\{t - 1, \omega\right\}}\left(x\right)\,\vert\, \mathcal{F}_{t - 1}\right]\right]\\
    &\:\:\,\vdots\\
    &\leq \mathbb{E}\left[M_{\min\left\{0, \omega\right\}}\left(x\right)\right] = \mathbb{E}\left[M_{0}\left(x\right)\right] = 1
\end{align*}
Therefore, $\mathbb{E}[M_{\omega}^{x}] \leq 1$, as required. 
\end{proof}

\section{Missing Proofs}\label{section: technical lemmas}
Proving Theorem \ref{theorem: delayed ofu-glm} required the introduction of four technical lemmas. These lemmas are crucial in showing that our algorithms only suffer from an additive penalty due to the delays. 

\inverse*
\begin{tproof}
From Equations \eqref{equation: total design matrix}, \eqref{equation: observed design matrix} and \eqref{equation: missing design matrix}, and $\lambda I\succ 0$, we have that the total and observed gram matrices are symmetric and positive-definite. They are symmetric because they are the sum of symmetric matrices. And they are positive-definite because they are the sum of a positive-definite matrix and a positive semi-definite matrix. Thus, the first part of the lemma follows from the fact that all symmetric positive-definite matrices are invertible.\\

Next, we move on to the second claim of the lemma. From Equations \eqref{equation: total design matrix}, \eqref{equation: observed design matrix} and \eqref{equation: missing design matrix}, we have that the total, observed, and missing design matrices satisfy the following relationship: 
\begin{equation}\label{equation: matrices relationship}
    \bar{V}_{t} = \lambda I + \sum_{s = 1}^{t} X_{s}X_{s}^{T}\left(\mathds{1}\{s + \tau_{s} \leq t\} + \mathds{1}\{s + \tau_{s} > t\}\right)
    = \bar{W}_{t} + Z_{t}
\end{equation}
We prove the second statement in the lemma as follows:
\begin{align*}
    \bar{W}_{t}^{-1} &= \bar{V}_{t}^{-1} + \bar{W}_{t}^{-1} - \bar{V}_{t}^{-1}\\
    &= \bar{V}_{t}^{-1} + \bar{V}_{t}^{-1} \bar{V}_{t}\bar{W}_{t}^{-1} - \bar{V}_{t}^{-1}\bar{W}_{t}\bar{W}_{t}^{-1} \\
    &= \bar{V}_{t}^{-1} + \bar{V}_{t}^{-1}\left( \bar{V}_{t} - \bar{W}_{t}\right)\bar{W}_{t}^{-1}\\
    &= \bar{V}_{t}^{-1} + \bar{V}_{t}^{-1}Z_{t}\:\bar{W}_{t}^{-1},
\end{align*}
where the final equality follows from rearranging Equation \eqref{equation: matrices relationship}.
\end{tproof}

\potential*
\begin{tproof}
Firstly, we rewrite the norm as follows:
\begin{align*}
    \norm{X_{t}}_{M_{t - 1}} &= \norm{X_{t}}_{\bar{V}_{t - 1}^{-1} Z_{t - 1} \bar{W}_{t - 1}^{-1}}\\
    &= \sqrt{X_{t}^{T} \bar{V}_{t - 1}^{-1} Z_{t - 1} \bar{W}_{t - 1}^{-1}  X_{t}}\\
    &= \sqrt{\Tr \left(X_{t}^{T}\bar{V}_{t - 1}^{-1} Z_{t - 1} \bar{W}_{t - 1}^{-1} X_{t}\right)} \\
    &= \sqrt{\Tr \left(\bar{V}_{t - 1}^{-1}Z_{t - 1} \bar{W}_{t - 1}^{-1} X_{t}X_{t}^{T}\right)}
\end{align*}

Let $A = \bar{V}_{t - 1}^{-1} Z_{t - 1}$ and $B = \bar{W}_{t - 1} X_{t}X_{t}^{T}$. Then Lemma \ref{lemma: ab has positive eigenvalues} guarantees that $A$ and $B$ have non-negative eigenvalues, meaning:
$$\Tr(AB) = \Tr(AB^{1/2}B^{1/2}) = \Tr(B^{1/2} A B^{1/2}) \leq \Tr(B^{1/2} (\Tr(A))I B^{1/2}) = \Tr(A) \Tr(B)$$
Therefore
\begin{align*}
    \norm{X_{t}}_{M_{t - 1}} &\leq \sqrt{\Tr\left(\bar{W}_{t - 1}^{-1} X_{t}X_{t}^{T}\right) \Tr\left(\bar{V}_{t - 1}^{-1}Z_{t - 1}\right)}\\
    &\leq \frac{1}{2}\Tr\left(\bar{W}_{t - 1}^{-1} X_{t}X_{t}^{T}\right) + \frac{1}{2}\Tr\left(\bar{V}_{t - 1}^{-1}Z_{t - 1}\right) \tag{AM-GM Inequality}\\
    &= \frac{1}{2}\norm{X_{t}}_{\bar{W}_{t - 1}^{-1}}^{2} + \frac{1}{2}\sum_{s - 1}^{t - 1}\mathds{1}\left\{s + \tau_{s} > t - 1\right\}\norm{X_{s}}_{\bar{V}_{t - 1}^{-1}}^{2}\tag{Equation \eqref{equation: missing design matrix}}\\
    &\leq \frac{1 + G_{t - 1}}{2}\norm{X_{t}}_{\bar{V}_{t - 1}^{-1}}^{2} + \frac{1}{2}\sum_{s - 1}^{t - 1}\mathds{1}\left\{s + \tau_{s} > t - 1\right\}\norm{X_{s}}_{\bar{V}_{t - 1}^{-1}}^{2}\tag{Lemma \ref{lemma: matrix inverse relationship} \& \ref{lemma: product matrix bound} and $\lambda \geq 1$}\\
    &\leq \frac{1 + G_{*}}{2}\norm{X_{t}}_{V_{t - 1}^{-1}}^{2} + \frac{1}{2}\sum_{s = 1}^{t - 1}\mathds{1}\left\{s + \tau_{s} > t - 1\right\}\norm{X_{s}}_{V_{t - 1}^{-1}}^{2}\tag{$G_{t} \leq G_{*}$}
\end{align*}

Now, we are ready to reintroduce the outer summation. Doing so gives: 
\begin{align*}
      \sum_{t = 1}^{T} \norm{X_{t}}_{M_{t - 1}} &\leq \frac{1 + G_{*}}{2} \sum_{t = 1}^{T}\norm{X_{t}}_{V_{t - 1}^{-1}}^{2} + \frac{1}{2} \sum_{t = 1}^{T}\sum_{s = 1}^{t - 1}\mathds{1}\left\{s + \tau_{s} > t - 1\right\}\norm{X_{s}}_{V_{t - 1}^{-1}}^{2}\\
      &\leq \frac{1 + G_{*}}{2} \sum_{t = 1}^{T}\norm{X_{t}}_{V_{t - 1}^{-1}}^{2} + \frac{1}{2} \sum_{t = 1}^{T}\sum_{s = 1}^{t - 1}\mathds{1}\left\{s + \tau_{s} > t - 1\right\}\norm{X_{s}}_{V_{s - 1}^{-1}}^{2}\tag{$\bar{V}_{t} \succeq V_{s}$ for $t \geq s$}\\
      &\leq \frac{1 + G_{*}}{2} \sum_{t = 1}^{T}\norm{X_{t}}_{V_{t - 1}^{-1}}^{2} + \frac{1}{2} \sum_{t = 1}^{T}\tau_{t}\,\norm{X_{t}}_{V_{t - 1}^{-1}}^{2}
\end{align*}
The final equality follows from expanding the two summations and realising that the indicator ensures each term contributes to the summation $\tau_{t}$ times. Simply rearranging the above terms gives the final result.
\end{tproof}

\missing*
\begin{tproof}
The proof of this claim is similar to that found in work done on multi-armed bandits \citep{Joulani2013}. However, we extend the result so it holds for continuous delay distributions. Bernstein's inequality gives the following tail bound on sums of subgaussian random variables: 
\begin{equation*}
    \mathbb{P}\left(G_{t} - \mathbb{E}\left[G_{t}\right] \geq \frac{2}{3}\log\left(\frac{1}{\delta'}\right) + 2\sqrt{\mathbb{V}\left[G_{t}\right]\log\left(\frac{1}{\delta'}\right)} \,\right) \leq \delta'
\end{equation*}
Setting $\delta' = 6\delta/\pi^2 t^2$ and taking a union bound over all possible rounds reveals that: 
\begin{equation*}
    \mathbb{P}\left(\exists\, t\in \mathbb{N}_{1}: G_{t} - \mathbb{E}\left[G_{t}\right] \geq \frac{2}{3}\log\left(\frac{1}{\delta'}\right) + 2\sqrt{\mathbb{V}\left[G_{t}\right]\log\left(\frac{1}{\delta'}\right)}\, \right) \leq \sum_{t = 1}^{\infty}\delta' = \frac{6\delta}{\pi^2}\sum_{t = 1}^{\infty} \frac{1}{t^2} = \delta
\end{equation*}
Therefore, with probability $1 - \delta$: 
\begin{equation*}
    G_{t} \leq \mathbb{E}\left[G_{t}\right] + \frac{4}{3}\log\left(\frac{2t}{3\delta}\right) + 2\sqrt{2\mathbb{V}\left[G_{t}\right]\log\left(\frac{2t}{3\delta}\right)}
\end{equation*}
for any $t\in \mathbb{N}_{1}$. All that remains is to show that expectation and variance of the number of missing feedbacks is smaller than the expected delay. By assumption, the delays are independent. Therefore, each of the indicator variables involved in the definition of $G_{t}$ are independent. Considering its expectation reveals that: 
\begin{align*}
    \mathbb{E}\left[G_{t}\right] &=  \sum_{s = 1}^{t}\mathbb{E}\left[\mathds{1}\left\{s + \tau_{s} > t\right\}\right] =  \sum_{s = 1}^{t}\mathbb{P}\left[s + \tau_{s} > t\right] = \sum_{i = 0}^{t - 1} \mathbb{P}\left[\,\tau_{t - i} > i\,\right]\\
    &\leq \sum_{i = 0}^{\infty} \mathbb{P}\left[\,\tau > i\,\right] = \sum_{i = 0}^{\infty} \sum_{j = i + 1}^{\infty} \mathbb{P}\left[\,\tau = j\,\right] = \sum_{j = 1}^{\infty} \sum_{i = 0}^{j - 1} \mathbb{P}\left[\,\tau = j\,\right]\\
    &= \sum_{j = 1}^{\infty} j\,\mathbb{P}\left[\,\tau = j\,\right] = \mathbb{E}\left[\tau\right]
\end{align*}
for discrete delay distributions. For continuous delay distributions, we can obtain a similar result by utilising the fact that the complement of the cumulative distribution function is non-increasing: 
\begin{align*}
    \mathbb{E}\left[G_{t}\right] &=  \sum_{s = 1}^{t}\mathbb{E}\left[\mathds{1}\left\{s + \tau_{s} > t\right\}\right] = \sum_{s = 1}^{t}\mathbb{P}\left[\tau_{s} > t - s\right] = \sum_{x = 0}^{t - 1}\mathbb{P}\left[\tau > x\right]\\
    &\leq 1 + \int_{0}^{t} \mathbb{P}\left[\,\tau > x\,\right] dx = 1 + \int_{0}^{t} \int_{x}^{\infty} f_{\tau}(y)\,dy \,dx \tag{Setting $x = t - s$}\\
    &\leq 1 + \int_{0}^{\infty} \int_{x}^{\infty} f_{\tau}(x)\,dy \,dx = 1 + \int_{0}^{\infty} \int_{0}^{y} f_{\tau}(y)\,dx\,dy\tag{Tonelli's Theorem}\\
    &= 1 + \int_{0}^{\infty} \left[x f_{\tau}(y)\right]_{0}^{y} dy = 1 + \int_{0}^{\infty} y f_{\tau}(y) dy\\
    &= 1 + \mathbb{E}\left[\tau\right]
\end{align*}

Similarly, looking at the variance reveals that: 
\begin{align*}
    \mathbb{V}\left[G_{t}\right] &= \sum_{s = 1}^{t}\mathbb{V}\left[\mathds{1}\left\{s + \tau_{s} \geq t\right\}\right] \leq \sum_{s = 1}^{t} \mathbb{E}\left[\mathds{1}\left\{s + \tau_{s} \geq t\right\}^{2}\right] = \mathbb{E}\left[G_{t}\right]\,,
\end{align*}
which is smaller than the expected delay. Therefore, 
\begin{equation*}
    G_{t} \leq 1 + \mathbb{E}\left[\tau\right] + \frac{4}{3}\log\left(\frac{2t}{3\delta}\right) + 2\sqrt{2\mathbb{E}\left[G_{t}\right]\log\left(\frac{2t}{3\delta}\right)}
\end{equation*}
as required. 
\end{tproof}

\subsection{Supporting Lemmas}
Proving Lemma \ref{lemma: product elliptical potential} requires Lemma \ref{lemma: product matrix bound}, which itself requires two additional results. We state and prove all three of these results in this subsection.

\begin{lemma}\label{lemma: ab has positive eigenvalues}
Let $A\in\mathbb{R}^{d\times d}$ and $B\in\mathbb{R}^{d\times d}$ be two symmetric positive semi-definite matrices. Then, $A^{1/2}BA^{1/2}$ and $AB$ share the same set of eigenvalues. Further, these eigenvalues are all non-negative.
\end{lemma}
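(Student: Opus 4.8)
The plan is to reduce everything to two classical facts: that a symmetric positive semi-definite matrix has a symmetric positive semi-definite square root, and that for any two square matrices $X$ and $Y$ of the same size, $XY$ and $YX$ have the same characteristic polynomial (hence the same eigenvalues, counted with multiplicity). First I would invoke the spectral theorem: since $A$ is symmetric and $A \succeq 0$, it admits a (unique) symmetric square root $A^{1/2} \succeq 0$ with $A^{1/2}A^{1/2} = A$.

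Next I would set $X = A^{1/2}$ and $Y = A^{1/2}B$, both in $\mathbb{R}^{d\times d}$, so that $XY = A^{1/2}A^{1/2}B = AB$ while $YX = A^{1/2}BA^{1/2}$. The desired spectral equality is then precisely the cospectrality of $XY$ and $YX$. To prove this I would first handle the case where $X = A^{1/2}$ is invertible, where $YX = X^{-1}(XY)X$ is similar to $XY$ and the claim is immediate; the general (possibly singular) case then follows by a continuity argument, replacing $X$ by $X + \varepsilon I$, which is invertible for all but finitely many $\varepsilon$, and letting $\varepsilon \to 0$, using that the coefficients of the characteristic polynomial depend continuously on the matrix entries. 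Equivalently, one can apply the block-matrix determinant identity obtained by computing $\det\!\begin{pmatrix} \lambda I & X \\ Y & I \end{pmatrix}$ in two ways via Schur complements, which yields $\det(\lambda I - XY) = \det(\lambda I - YX)$ directly.

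Finally I would read off non-negativity from the symmetric representative $A^{1/2}BA^{1/2}$. It is symmetric because $(A^{1/2}BA^{1/2})^{T} = A^{1/2}B^{T}A^{1/2} = A^{1/2}BA^{1/2}$, and for every $x \in \mathbb{R}^{d}$ we have $x^{T} A^{1/2}BA^{1/2} x = (A^{1/2}x)^{T} B (A^{1/2}x) \geq 0$ since $B \succeq 0$; hence $A^{1/2}BA^{1/2} \succeq 0$ and all of its eigenvalues are non-negative. Since $AB$ has exactly the same eigenvalues, its eigenvalues are non-negative as well. The only mildly delicate point in the argument is the cospectrality of $XY$ and $YX$ when $X$ is singular, which is why I would keep the perturbation step (or the block-determinant identity) explicit rather than merely citing the invertible case.
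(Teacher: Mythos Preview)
Your proposal is correct and follows essentially the same strategy as the paper: factor $AB = A^{1/2}(A^{1/2}B)$, invoke the cospectrality of $XY$ and $YX$ with $X=A^{1/2}$ and $Y=A^{1/2}B$, and then read off non-negativity from the symmetric positive semi-definite matrix $A^{1/2}BA^{1/2}$. Your characteristic-polynomial/perturbation argument for cospectrality is in fact more complete than the paper's direct eigenvector manipulation, which only verifies one inclusion and does not treat the degenerate case where the putative eigenvector $A^{1/2}B\vec{v}$ vanishes.
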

\begin{proof}
Since $A$ is positive semi-definite, we can utilise the spectral decomposition to show that: $AB = A^{1/2} A^{1/2} B$. Suppose $AB$ has an eigenvalue equal to $\lambda$. Then, there exists a non-zero eigenvector such that: 
\begin{equation*}
    AB \vec{v} = A^{1/2} A^{1/2} B \vec{v} = \lambda \vec{v}
\end{equation*}
Pre-multiplying both sides of the above equation by the same matrix gives: 
\begin{equation*}
    A^{1/2} B AB \vec{v} = A^{1/2} B A^{1/2} \left(A^{1/2}B\right) \vec{v} = \lambda \left(A^{1/2} B\right) \vec{v}
\end{equation*}
Thus, $AB$ and $A^{1/2}BA^{1/2}$ share the same set of eigenvalues, albeit with different eigenvectors, verifying the first statement of the lemma. Now, $A^{1/2} B A^{1/2}$ is symmetric, because:
\begin{equation*}
    \left(A^{1/2} B A^{1/2}\right)^{T} = (A^{1/2})^{T} B^{T} (A^{1/2})^T = A^{1/2}BA^{1/2}
\end{equation*}
Further, it is positive semi-definite, because:
\begin{equation*}
    \underbrace{x^T A^{1/2}}_{\tilde{x}^T} B \underbrace{A^{1/2} x}_{\tilde{x}} = \tilde{x}^{T} B \tilde{x} \geq 0
\end{equation*}
The final inequality follows from the fact that $B$ is positive semi-definite. Therefore, $A^{1/2} B A^{1/2}$ must have non-negative eigenvalues, as it is symmetric and positive semi-definite. Recall $AB$ and $A^{1/2} B A^{1/2}$ shares the same set of eigenvalues. Therefore, $AB$ has non-negative eigenvalues too. 
\end{proof}

\begin{lemma}\label{lemma: missing matrix bound}
Let $Z_{t}$ and $G_{t}$ be the missing design matrix and the number of missing feedbacks at the end of the $t$-th round, respectively. Then, $\lambda_{1}(Z_{t}) \leq G_{t}$.
\end{lemma}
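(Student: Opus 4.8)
The plan is to bound the top eigenvalue through its variational (Rayleigh quotient) characterisation and then exploit the fact that every action has Euclidean norm at most one. Concretely, I would start from
\begin{equation*}
    \lambda_{1}(Z_{t}) = \max_{\norm{v}_{2} = 1} v^{T} Z_{t} v = \max_{\norm{v}_{2} = 1} \sum_{s = 1}^{t} \mathds{1}\{s + \tau_{s} > t\}\, (v^{T} X_{s})^{2},
\end{equation*}
using the definition of $Z_{t}$ in Equation \eqref{equation: missing design matrix} and linearity.

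Next, for any unit vector $v$ and any $s$, Cauchy--Schwarz gives $(v^{T} X_{s})^{2} \leq \norm{v}_{2}^{2}\norm{X_{s}}_{2}^{2} = \norm{X_{s}}_{2}^{2} \leq 1$, where the last inequality is the standing assumption that $\norm{x}_{2} \leq 1$ for all $x \in \mathcal{A}_{t}$ and all $t$. Substituting this termwise bound into the sum above yields, for every unit vector $v$,
\begin{equation*}
    \sum_{s = 1}^{t} \mathds{1}\{s + \tau_{s} > t\}\, (v^{T} X_{s})^{2} \leq \sum_{s = 1}^{t} \mathds{1}\{s + \tau_{s} > t\} = G_{t},
\end{equation*}
by the definition of $G_{t}$. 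Taking the maximum over unit vectors on the left-hand side preserves the inequality, so $\lambda_{1}(Z_{t}) \leq G_{t}$, as claimed. (An equivalent route would be to invoke subadditivity of the largest eigenvalue, i.e.\ Weyl's inequality, over the $G_{t}$ rank-one summands $X_{s}X_{s}^{T}$ together with $\lambda_{1}(X_{s}X_{s}^{T}) = \norm{X_{s}}_{2}^{2} \leq 1$; this gives the same bound.)

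There is no real obstacle here: the only substantive ingredient is the norm bound on the action set, and the rest is the standard Rayleigh-quotient manipulation plus Cauchy--Schwarz. The one point worth stating carefully is that the indicator-weighted sum has exactly $G_{t}$ nonzero terms, each contributing at most one, which is precisely what makes the bound $G_{t}$ rather than $t$.
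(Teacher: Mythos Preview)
Your proof is correct and essentially matches the paper's: the paper argues via Weyl's inequality together with $\lambda_{1}(X_{s}X_{s}^{T}) = \norm{X_{s}}_{2}^{2} \leq 1$, which is exactly the alternative route you note in parentheses, while your primary Rayleigh-quotient argument is just the same computation unpacked. Both hinge on the single substantive fact that $\norm{X_{s}}_{2}\leq 1$, so there is no meaningful difference in approach.
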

\begin{proof}
By Equations \eqref{equation: total design matrix}, \eqref{equation: observed design matrix} and \eqref{equation: missing design matrix}, we have that:
\begin{align*}
Z_{t} = \sum_{s\leq t} \mathds{1}\{s + \tau_{s} > t\} X_{s}X_{s}^{T},
\end{align*}
Clearly, $X_{s}X_{s}^{T}$ is a symmetric matrix, as it is the outer product of two vectors. The Courant–Fischer–Weyl min-max principle shows that: 
\begin{align*}
    \lambda_{1}\left(X_{s}X_{s}^{T}\right) \leq \norm{X_{s}}_{2}^{2} \leq 1,
\end{align*}
where the final inequality follows from assuming that the vectors are appropriately normalised. Applying Weyl's inequality repeatedly to each symmetric matrix in the summation and utilising the above result gives:
\begin{align*}
\lambda_{1}\left(Z_{t}\right) \leq  \sum_{s\leq t} \mathds{1}\{s + \tau_{s} > t\} \lambda_{1}\left(I\right) = G_{t}
\end{align*}
as required.
\end{proof}

\begin{lemma}\label{lemma: product matrix bound}
Let $\bar{V}_{t}$, $\bar{W}_{t}$ and $Z_{t}$ be the total, observed and missing gram matrices, respectively. Then, 
\begin{equation*}
    \frac{G_{t}}{\lambda}\,\bar{V}_{t}^{-1} \succeq \bar{V}_{t}^{-1} Z_{t}\: \bar{W}_{t}^{-1} = M_{t}
\end{equation*}
\end{lemma}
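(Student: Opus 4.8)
The plan is to reduce the claimed Loewner inequality to the scalar eigenvalue bound $\lambda_1(Z_t) \le G_t$ supplied by Lemma \ref{lemma: missing matrix bound}. The first and most important step is to replace the triple product defining $M_t$ by a symmetric expression: Lemma \ref{lemma: matrix inverse relationship} already gives $M_t = \bar V_t^{-1} Z_t \bar W_t^{-1} = \bar W_t^{-1} - \bar V_t^{-1}$, which is manifestly symmetric, and since $\bar V_t = \bar W_t + Z_t \succeq \bar W_t$ with both matrices positive definite, inversion reverses the order and yields $M_t = \bar W_t^{-1} - \bar V_t^{-1} \succeq 0$. Thus the target $M_t \preceq (G_t/\lambda)\,\bar V_t^{-1}$ is an honest comparison of symmetric positive semidefinite matrices, and I can work with it using standard operator-monotonicity facts.

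Next I would unwind the target through a chain of equivalences. Substituting $M_t = \bar W_t^{-1} - \bar V_t^{-1}$, the inequality is equivalent to $\bar W_t^{-1} \preceq \frac{\lambda + G_t}{\lambda}\,\bar V_t^{-1}$; using that $A \preceq B \iff B^{-1} \preceq A^{-1}$ for positive definite matrices, together with the fact that scaling by the positive constant $\lambda + G_t$ preserves the ordering, this becomes $\lambda\, \bar V_t \preceq (\lambda + G_t)\,\bar W_t$; and finally expanding $\bar V_t = \bar W_t + Z_t$ and cancelling $\lambda\, \bar W_t$ from both sides reduces everything to the single matrix inequality $\lambda\, Z_t \preceq G_t\, \bar W_t$. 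When $G_t = 0$ there are no missing rewards, so $Z_t = 0$ and $M_t = 0$, and both sides of the claim vanish, so this case is trivial.

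To finish, I would note that $\bar W_t = \lambda I + \sum_{s \le t}\mathds{1}\{s + \tau_s \le t\}\,X_s X_s^T \succeq \lambda I$ by Equation \eqref{equation: observed design matrix}, hence $G_t\, \bar W_t \succeq \lambda G_t\, I$; and since $Z_t$ is symmetric we have $Z_t \preceq \lambda_1(Z_t)\,I \preceq G_t\,I$ by Lemma \ref{lemma: missing matrix bound}, so $\lambda\, Z_t \preceq \lambda G_t\, I \preceq G_t\, \bar W_t$, which is exactly the reduced inequality. I expect the only subtle point to be the opening move: one must pass to the symmetric form $\bar W_t^{-1} - \bar V_t^{-1}$ before attempting anything, since manipulating the non-symmetric, non-commuting product $\bar V_t^{-1} Z_t \bar W_t^{-1}$ directly would make the inversion and monotonicity arguments considerably more awkward. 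After that reduction, the remaining steps are routine monotonicity of the matrix inverse and a direct application of Lemma \ref{lemma: missing matrix bound}.
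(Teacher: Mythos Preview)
Your proof is correct, and it reaches the same conclusion by a cleaner and more elementary route than the paper's. Both arguments begin with the same observation---that $M_t = \bar V_t^{-1} Z_t \bar W_t^{-1} = \bar W_t^{-1} - \bar V_t^{-1}$ is symmetric---and both ultimately rely on Lemma~\ref{lemma: missing matrix bound} for the bound $\lambda_1(Z_t) \le G_t$. From that point, however, the paper takes a considerably longer detour: it introduces an auxiliary matrix $\tilde D_t = \bar V_t^{-1/2}\bigl(\tfrac{G_t}{\lambda}I - Z_t \bar W_t^{-1}\bigr)\bar V_t^{-1/2}$, invokes Lemma~\ref{lemma: ab has positive eigenvalues} to match its eigenvalues with those of $D_t$, then shows $\tilde D_t \succeq 0$ by a symmetric/antisymmetric decomposition followed by a spectral-norm estimate $\lVert Z_t \bar W_t^{-1} + \bar W_t^{-1} Z_t\rVert_2 \le 2\lVert Z_t\rVert_2 \lVert \bar W_t^{-1}\rVert_2 \le 2G_t/\lambda$.

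Your chain of equivalences sidesteps all of that machinery: once you rewrite the target as $\bar W_t^{-1} \preceq \tfrac{\lambda + G_t}{\lambda}\bar V_t^{-1}$, operator antimonotonicity of the inverse reduces it to $\lambda Z_t \preceq G_t \bar W_t$, which follows in one line from $Z_t \preceq G_t I$ and $\bar W_t \succeq \lambda I$. This is a genuine simplification---it avoids Lemma~\ref{lemma: ab has positive eigenvalues}, the eigendecomposition of $Z_t\bar W_t^{-1} + \bar W_t^{-1}Z_t$, and the sub-additivity/sub-multiplicativity bounds on the operator norm. The only ingredient you need beyond the paper's supporting lemmas is the standard fact that $A \preceq B \Rightarrow B^{-1} \preceq A^{-1}$ on the positive-definite cone, which is entirely unproblematic. (Your $G_t = 0$ case split is harmless but not strictly needed: the inversion step still goes through since $\lambda + G_t = \lambda > 0$, and the reduced inequality $0 \preceq 0$ is immediate.)
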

\begin{tproof}
Firstly, $A \succeq B \iff A - B \succeq 0$. Therefore, we focus on proving that the difference between the two matrices is positive semi-definite. That is, we prove that: 
\begin{equation}\label{equation: difference matrix}
    D_{t} = \frac{G_{t}}{\lambda}\,\bar{V}_{t}^{-1} - \bar{V}_{t}^{-1} Z_{t}\:\bar{W}_{t}^{-1}
\end{equation}
is positive semi-definite. To do so, we take a four-stepped approach. Below is an overview of these four steps. 
\begin{itemize}
    \item [1.] Firstly, we show that the matrix of \eqref{equation: difference matrix} is symmetric. 
    \item [2.] Next, we define a similar matrix and prove that it has the same set of eigenvalues as that of \eqref{equation: difference matrix}.
    \item [3.] Then, we show that all the eigenvalues of the similar matrix are non-negative.
    \item [4.] Finally, we chain the above three steps in reverse order and recall basic facts about symmetric matrices to prove the claim.
\end{itemize}

\textit{Step 1.} Indeed, $D_{t}$ is the difference of two symmetric matrices. From Equations \eqref{equation: total design matrix}, \eqref{equation: observed design matrix} and \eqref{equation: missing design matrix}, the first matrix is symmetric, as it is just the total gram matrix scaled by a constant. Also, the second matrix is symmetric because it is the difference between the two symmetric matrices:  
\begin{equation*}
\bar{V}_{t}^{-1} Z_{t} \bar{W}_{t}^{-1} =  \bar{V}_{t}^{-1} \left(\bar{V}_{t} - \bar{W}_{t}\right)\bar{W}_{t}^{-1} = \bar{W}_{t}^{-1} - \bar{V}_{t}^{-1}
\end{equation*}

Therefore, $D_{t}$ is symmetric, as it is the difference between two symmetric matrices. Indeed, a symmetric matrix must have all non-negative eigenvalues for positive semi-definiteness to hold. Thus, it is sufficient to find a matrix with the same eigenvalues and show that its quadratic form is greater than or equal to zero, for which non-negative eigenvalues is a necessary condition.\\ 

\textit{Step 2.} To that end, we define the following matrix:
\begin{equation*}
\tilde{D}_{t} \coloneqq \bar{V}_{t}^{-1/2}\left(\frac{G_{t}}{\lambda}I - Z_{t}\:\bar{W}_{t}^{-1}\right)\bar{V}_{t}^{-1/2}
\end{equation*}
Applying Lemma \ref{lemma: ab has positive eigenvalues} with $A = \bar{V_{t}}$ and $B = (G_{t}/\lambda) I - Z_{t}\bar{W}_{t}^{-1}$ reveals that $D_{t}$ and $\tilde{D}_{t}$ share the same set of eigenvalues, albeit with different eigenvectors.\\

 \textit{Step 3.} Showing $\tilde{D}_{t} \succeq 0$ proves it must have non-negative eigenvalues, as this is a necessary condition for the positive semi-definiteness of an arbitrary (possibly non-symmetric) matrix. Utilising the definition of positive semi-definiteness, we can verify whether or not this holds by checking if: $x^T \tilde{D}_{t} x \geq 0$. To do so, we first decompose the matrix into the sum of symmetric and anti-symmetric matrices: 
$$
\tilde{D}_{t} = \frac{1}{2}\left(\tilde{D}_{t} + \tilde{D}_{t}^{T}\right) + \frac{1}{2} \left(\tilde{D}_{t} - \tilde{D}_{t}^{T}\right),
$$
Then, we use the fact that: 
$$y = x^{T} \left(\tilde{D}_{t} - \tilde{D}_{t}^{T}\right) x = \left(x^{T} \left(\tilde{D}_{t} - \tilde{D}_{t}^{T}\right) x\right)^T =x^{T} \left(\tilde{D}_{t} - \tilde{D}_{t}^{T}\right)^{T} x = - x^{T} \left(\tilde{D}_{t} - \tilde{D}_{t}^{T}\right) x = -y\,,$$
which holds if and only if $y = 0$. Doing so gives:
\begin{align}
    x^T \tilde{D}_{t} x &= \frac{1}{2} x^T \left(\tilde{D}_{t} + \tilde{D}_{t}^{T}\right)x + \frac{1}{2} x^T \left(\tilde{D}_{t} - \tilde{D}_{t}^{T}\right) x = \frac{1}{2} x^T \left(\tilde{D}_{t} + \tilde{D}_{t}^{T}\right)x \nonumber \\
    &= \frac{1}{2} x^T\left(\bar{V}_{t}^{-1/2}\left(\frac{2G_{t}}{\lambda}I - Z_{t}\:\bar{W}_{t}^{-1} - \bar{W}_{t}^{-1} Z_{t}\right)\bar{V}_{t}^{-1/2}\right)x \nonumber\\
    &= \frac{G_{t}}{\lambda} x^T \bar{V}_{t}^{-1} x - \frac{1}{2}x^T\bar{V}_{t}^{-1/2}\left(Z_{t}\:\bar{W}_{t}^{-1} + \bar{W}_{t}^{-1} Z_{t}\right)\bar{V}_{t}^{-1/2}x  \nonumber\\
    &= \frac{G_{t}}{\lambda}\norm{x}_{\bar{V}_{t}^{-1}}^{2} - \frac{1}{2}x^T \bar{V}_{t}^{-1/2}\left(Z_{t}\:\bar{W}_{t}^{-1} + \bar{W}_{t}^{-1} Z_{t}\right)\bar{V}_{t}^{-1/2}x \nonumber
\end{align}
Now, $A = Z_{t}\:\bar{W}_{t}^{-1} + \bar{W}_{t}^{-1} Z_{t}$ is a real-value symmetric matrix. Therefore, its eigendecomposition is given by $A = Q\Lambda Q^T$ where $\Lambda$ is a diagonal matrix containing its eigenvalues and $Q$ is an orthogonal matrix whose columns contain its unit eigenvectors. 
\begin{align}
    x^T \tilde{D}_{t} x &= \frac{G_{t}}{\lambda}\norm{x}_{\bar{V}_{t}^{-1}}^{2}  - \frac{1}{2}x^T \bar{V}_{t}^{-1/2}Q\Lambda Q^T \bar{V}_{t}^{-1/2}x  \nonumber\\
    &= \frac{G_{t}}{\lambda}\norm{x}_{\bar{V}_{t}^{-1}}^{2}  - \frac{1}{2}\tilde{x}^T \Lambda \tilde{x}  \nonumber \tag{Setting $\tilde{x} = Q^T \bar{V}_{t}^{-1/2}x$}\\
    &= \frac{G_{t}}{\lambda}\norm{x}_{\bar{V}_{t}^{-1}}^{2}  - \frac{1}{2} \sum_{i = 1}^{d}\lambda_{i}\,\tilde{x}_{i}^{2} \label{eqn: mtilde norm}
\end{align}
where $\lambda_{i}$ is the $i$-th largest eigenvalue of matrix $A = Z_{t}\bar{W}_{t}^{-1} + \bar{W}_{t}^{-1} Z_{t}$. Indeed, $A$ is a symmetric matrix positive definite matrix. It is symmetric because it is the sum of a matrix and its transpose and it is positive semi-definite because: 
$$
\norm{x}_{A}^{2} = 2\underbrace{x^T Z_{t}}_{\in\mathbb{R}^{d}}(\bar{W}_{t}^{-1}) \underbrace{Z_{t} x}_{\in\mathbb{R}^{d}}
$$
and $\bar{W}_{t}^{-1}\succ 0$. Thus, it follows that the singular values $A$ are the absolute values of its eigenvalues. Define $\sigma_{i}$ as the $i$-th largest singular value of matrix $Z_{t}W_{t}\left(\lambda_{*}\right)^{-1} + W_{t}\left(\lambda_{*}\right)^{-1} Z_{t}$. Then, we have that:
\begin{align*}
        \eqref{eqn: mtilde norm}  &\geq  \frac{G_{t}}{\lambda} \norm{x}_{V_{t}\left(\lambda_{*}\right)^{-1}}^{2} - \frac{1}{2} \sum_{i = 1}^{d}\sigma_{i}\,\tilde{x}_{i}^{2} \tag{Since $x \leq \abs{x}$}\\
        &\geq \frac{G_{t}}{\lambda} \norm{x}_{\bar{V}_{t}^{-1}}^{2} - \frac{1}{2} \sigma_{\max}\norm{x}_{\bar{V}_{t}^{-1}}^{2} \tag{Definition of $\tilde{x}$}\\
        &= \frac{G_{t}}{\lambda} \norm{x}_{\bar{V}_{t}^{-1}}^{2} - \frac{1}{2} \norm*{Z_{t}\bar{W}_{t}^{-1} + \bar{W}_{t}^{-1} Z_{t}}_{2}\,\norm{x}_{\bar{V}_{t}^{-1}}^{2} \tag{For $A\in \mathbb{R}^{d\times d}$: $\sigma_{\max} = \norm{A}_{2}$}\\
        &\geq \frac{G_{t}}{\lambda} \norm{x}_{\bar{V}_{t}^{-1}}^{2} - \frac{1}{2} \left(\norm{Z_{t}\bar{W}_{t}^{-1}}_{2} + \norm{\bar{W}_{t}^{-1} Z_{t}}_{2}\right)\norm{x}_{\bar{V}_{t}^{-1}}^{2} \tag{Matrix Norm is Sub-additive}\\
        &\geq \frac{G_{t}}{\lambda} \norm{x}_{\bar{V}_{t}^{-1}}^{2} - \norm{Z_{t}}_{2}\norm{\bar{W}_{t}^{-1}}_{2}\,\norm{x}_{\bar{V}_{t}^{-1}}^{2}\tag{Matrix Norm is Sub-multiplicative}
\end{align*}

By definition, $\norm{\bar{W}_{t}^{-1}}_{2} \leq 1/\lambda$. Further, Lemma \ref{lemma: missing matrix bound} tells us that $\norm{Z_{t}}_{2} \leq G_{t}$. 
Substituting this into the above gives: 
\begin{align*}
   x^T \tilde{D}_{t} x &\geq \frac{G_{t}}{\lambda} \norm{x}_{\bar{V}_{t}^{-1}}^{2} - \norm{Z_{t}}_{2}\norm{\bar{W}_{t}^{-1}}_{2}\,\norm{x}_{\bar{V}_{t}^{-1}}^{2}\\
   &\geq \frac{G_{t}}{\lambda} \norm{x}_{\bar{V}_{t}^{-1}}^{2} - \frac{G_{t}}{\lambda_{*}} \norm{x}_{\bar{V}_{t}^{-1}}^{2}\\
   &= 0
\end{align*}

\textit{Step 4.} Now, Step 3 shows $\tilde{D}_{t}$ is positive semi-definite, implying all of its eigenvalues are non-negative. Therefore, Step 2 shows $D_{t}$ has non-negative eigenvalues. Finally, Step 1 shows $D_{t}$ is a symmetric matrix. Since $D_{t}$ is symmetric, non-negative eigenvalues implies positive semi-definiteness. Finally, $D_{t} \succeq 0$ implies that: 
$$
\frac{G_{t}}{\lambda}\bar{V}_{t}^{-1} \succeq  \bar{V}_{t}^{-1} Z_{t}\:\bar{W}_{t} = M_{t}\,,
$$
as required. 
\end{tproof}
\section{Standard Results}\label{section: standard results}
Here, we present a selection of well-known tail bounds for subgaussian and subexponential random variables that find use in our paper. Additionally, we provide proof of the elliptical potential lemma. 

\begin{lemma}[Bernstein's Inequality]
Let $\{X_{t}\}_{t = 1}^{n}$ be a sequence of independent and identically distributed $\sigma_{t}$-subgaussian random variables. Define $S_{n} = X_{1} + X_{2} + \cdots + X_{n}$. Then, 
\begin{equation*}
    \mathbb{P}\left(S_{n} - \mathbb{E}\left[S_{n}\right] \geq \frac{2}{3}\log\left(\frac{1}{\delta}\right) + 2\sqrt{\mathbb{V}\left[S_{n}\right]\log\left(\frac{1}{\delta}\right)}\right)\leq \delta
\end{equation*}
\end{lemma}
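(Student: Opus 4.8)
The plan is to establish this via the classical Chernoff-method proof of Bernstein's inequality and then invert the resulting exponential tail bound to read off the stated high-probability deviation. Write $\varepsilon = \tfrac{2}{3}\log(1/\delta) + 2\sqrt{\mathbb{V}[S_n]\log(1/\delta)}$ for the target deviation. First I would apply the exponential Markov (Chernoff) bound together with independence of the $X_i$ to obtain, for every $\gamma > 0$,
\[
\mathbb{P}\big(S_n - \mathbb{E}[S_n] \ge \varepsilon\big) \le e^{-\gamma \varepsilon}\prod_{i=1}^{n}\mathbb{E}\big[e^{\gamma(X_i - \mathbb{E}[X_i])}\big].
\]
Next I would control each MGF factor by the standard Bernstein moment bound $\mathbb{E}[e^{\gamma(X_i-\mathbb{E}[X_i])}] \le \exp\!\big(\tfrac{\gamma^2\mathbb{V}[X_i]/2}{1-\gamma/3}\big)$, valid for $0<\gamma<3$; this follows from the elementary power-series inequality $e^u-1-u \le \tfrac{u^2/2}{1-u/3}$ on $[0,3)$ applied term-by-term after bounding the higher centred moments of $X_i$ by its variance. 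Multiplying over $i$ and using $\mathbb{V}[S_n]=\sum_i\mathbb{V}[X_i]$ gives $\mathbb{P}(S_n-\mathbb{E}[S_n]\ge\varepsilon) \le \exp\!\big(-\gamma\varepsilon + \tfrac{\gamma^2\mathbb{V}[S_n]/2}{1-\gamma/3}\big)$.

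Second, I would optimise over $\gamma\in(0,3)$: the near-optimal choice $\gamma = \varepsilon/(\mathbb{V}[S_n]+\varepsilon/3)$ collapses the exponent to the familiar form $\mathbb{P}(S_n-\mathbb{E}[S_n]\ge\varepsilon) \le \exp\!\big(-\tfrac{\varepsilon^2/2}{\mathbb{V}[S_n]+\varepsilon/3}\big)$. Third comes the inversion. Setting the right-hand side to $\delta$ is equivalent to the quadratic $\varepsilon^2 - \tfrac{2}{3}\log(1/\delta)\,\varepsilon - 2\mathbb{V}[S_n]\log(1/\delta) = 0$, whose positive root is $\tfrac{1}{3}\log(1/\delta) + \sqrt{\tfrac{1}{9}\log^2(1/\delta) + 2\mathbb{V}[S_n]\log(1/\delta)}$; applying $\sqrt{a+b}\le\sqrt a+\sqrt b$ and then $\sqrt{2}\le 2$ bounds this root above by exactly the claimed $\varepsilon$. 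Since the tail bound of the previous step is non-increasing in the deviation, evaluating it at this (no smaller) value of $\varepsilon$ yields probability at most $\delta$, which is the statement.

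There is no deep obstacle — this is essentially a textbook inequality — but two points deserve attention. The first is justifying the Bernstein moment bound with the constants used above: it holds cleanly when the $X_i$ are bounded, for instance the $\{0,1\}$-valued indicators to which this lemma is applied in Lemma~\ref{lemma: number missing bound}, so the variance-dependent form is really leveraging boundedness of the summands rather than mere subgaussianity, and I would be explicit about that. The second is bookkeeping in the inversion step, i.e. tracking the constants $\tfrac13$ versus $\tfrac23$ and $\sqrt2$ versus $2$ so that the final deviation matches the statement verbatim; I would double-check the quadratic-formula computation and the two elementary inequalities used to simplify it.
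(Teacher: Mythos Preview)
The paper does not actually prove this lemma: it is listed in Appendix~\ref{section: standard results} under ``Standard Results'' as a well-known tail bound and is simply quoted without argument. Your Chernoff--Bernstein derivation is the textbook route and is correct; in particular, your remark that the variance-form MGF bound really relies on boundedness of the summands (as in the indicator application of Lemma~\ref{lemma: number missing bound}) rather than on subgaussianity per se is well taken and more careful than the paper's own statement of the hypothesis.
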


\begin{lemma}[Subexponential Tail Bounds]
Suppose $\{\tau_{t}\}_{t = 1}^{\infty}$ are $(v, \alpha)$ subexponential random variables. Then, 
\begin{equation*}
    \mathbb{P}\left(\tau_{t} - \mathbb{E}\left[\tau_{t}\right] \geq \epsilon \right) \leq \exp\left(-\frac{1}{2} \min\left\{\frac{\epsilon^2}{v^2}, \frac{\epsilon}{\alpha}\right\}\right)
\end{equation*}
Therefore, with probability $1 - \delta$:
\begin{equation*}
\tau_{t} \leq \mathbb{E}\left[\tau_{t}\right] + \min\left\{\sqrt{2 v^2 \log\left(2t/3\delta\right)}, 2\alpha \log\left(2t/3\delta'\right)\right\}
\end{equation*}
for any $t \in \mathbb{N}_{1}$.
\end{lemma}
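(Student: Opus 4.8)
The plan is to derive the pointwise tail bound by the standard Chernoff argument and then invert it. First I would fix $\gamma \in [0, 1/\alpha]$ and apply Markov's inequality to $\exp(\gamma(\tau_{t} - \mathbb{E}[\tau_{t}]))$; combined with the one-sided restriction of the sub-exponential moment generating function bound from Assumption \ref{assumption: subexponential delays}, this yields
$$\mathbb{P}\left(\tau_{t} - \mathbb{E}[\tau_{t}] \geq \epsilon\right) \leq \exp\left(-\gamma\epsilon + \tfrac{1}{2}v^{2}\gamma^{2}\right) \quad \text{for all } 0 \leq \gamma \leq 1/\alpha.$$
The next step is to minimise the exponent over the admissible range of $\gamma$. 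The unconstrained minimiser is $\gamma^{\star} = \epsilon/v^{2}$, so I would split into two cases. If $\epsilon \leq v^{2}/\alpha$ then $\gamma^{\star}$ is feasible and substituting it gives exponent $-\epsilon^{2}/(2v^{2})$; moreover $\epsilon \leq v^{2}/\alpha$ forces $\epsilon^{2}/v^{2} \leq \epsilon/\alpha$, so this equals $-\tfrac12\min\{\epsilon^{2}/v^{2},\,\epsilon/\alpha\}$. If instead $\epsilon > v^{2}/\alpha$, the constrained optimum sits at the boundary $\gamma = 1/\alpha$, giving exponent $-\epsilon/\alpha + v^{2}/(2\alpha^{2})$, and since $v^{2}/\alpha < \epsilon$ this is at most $-\epsilon/(2\alpha) = -\tfrac12\min\{\epsilon^{2}/v^{2},\,\epsilon/\alpha\}$ (the minimum being $\epsilon/\alpha$ in this regime). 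Combining the two cases gives the first displayed inequality.

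For the high-probability reformulation I would invert the tail bound. Writing the per-round confidence parameter as $\delta' = 3\delta/(2t)$ (so that $\log(1/\delta') = \log(2t/(3\delta))$, matching the form used for the companion delay results and leaving room for a union bound over rounds as in Lemma \ref{lemma: number missing bound}), I would choose $\epsilon$ so that the exponent in the first bound equals $\log(1/\delta')$: in the sub-Gaussian regime ($\epsilon \leq v^{2}/\alpha$) this is $\epsilon = \sqrt{2v^{2}\log(1/\delta')}$, and in the sub-exponential regime it is $\epsilon = 2\alpha\log(1/\delta')$, so in each case the corresponding event has probability at most $\delta'$. Rearranging $\{\tau_{t} - \mathbb{E}[\tau_{t}] \geq \epsilon\}$ into $\{\tau_{t} \leq \mathbb{E}[\tau_{t}] + \epsilon\}$ then produces the stated bound, phrased with the $\min$ over the two thresholds corresponding to the two regimes.

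There is little genuine difficulty here, since the result is essentially a textbook Chernoff computation; the only point needing care is the constrained optimisation of the exponent, in particular verifying that the boundary choice $\gamma = 1/\alpha$ is correct when $\epsilon > v^{2}/\alpha$ and that the resulting exponent collapses to the claimed $\min$ form. A secondary bookkeeping issue is the dependence on $t$ in the threshold: this arises only from the intended union bound over rounds at the point of application, not from this lemma in isolation, so I would state the single-round inequality cleanly and carry out the union bound (and the attendant choice of $\delta'$) where the lemma is actually invoked.
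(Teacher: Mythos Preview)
Your argument is correct and is exactly the standard Chernoff derivation one would expect. The paper itself does not actually prove this lemma: it is listed under ``Standard Results'' with no proof, and in the main text the corresponding statement is justified by a one-line citation to \citet{Wainwright2019} together with a union bound. So you have supplied what the paper simply quotes; there is nothing to compare on the analytical side.

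One small remark on the bookkeeping. You are right that the $t$-dependence in the threshold has nothing to do with the tail bound itself and only enters via a union bound at the point of use. Note, however, that the choice $\delta' = 3\delta/(2t)$ you wrote down does not sum to $\delta$ over all $t\in\mathbb{N}_{1}$; the paper's own statements here are themselves somewhat loose (compare the $\delta' = 6\delta/(\pi^{2}t^{2})$ used in the proof of Lemma~\ref{lemma: number missing bound} with the $\log(2t/3\delta)$ appearing in the displayed bounds), so do not read too much into the exact constants. Your instinct to state the single-round inequality cleanly and defer the union bound to the application is the right one.
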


\begin{lemma}[Elliptical Potential Lemma]\label{lemma: elliptical potential}
Let $\{X_{t}\}_{t = 1}^{\infty}$ be an arbitrary sequence of $d$-dimensional vectors such that $\norm{X_{t}}_{2}^{2} \leq 1$. Define $V_{0} = \lambda I$, $V_{t} = \sum_{s = 1}^{t} X_{s}X_{s}^T$ and $\bar{V}_{t} = V_{0} + V_{t}$. Then,
$$
\sum_{t = 1}^{T}\norm{X_{t}}_{\bar{V}_{t - 1}^{-1}}^{2} \leq 2\log\left(\frac{\det\left(\bar{V}_{T}\right)}{\det\left(\bar{V}_{0}\right)}\right)\leq  2d\log\left(\frac{d\lambda + T}{d\lambda}\right)
$$
for $\lambda \geq 1/2$.
\end{lemma}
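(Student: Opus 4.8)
The plan is to run the classical telescoping argument built on the matrix determinant lemma, keeping careful track of the regularisation constant $\lambda$. First I would note the rank-one recursion $\bar{V}_{t} = \bar{V}_{t-1} + X_{t}X_{t}^{T}$ (each $\bar{V}_{t-1} \succeq \lambda I \succ 0$ is invertible), and apply the matrix determinant lemma to obtain $\det(\bar{V}_{t}) = \det(\bar{V}_{t-1})\,(1 + \norm{X_{t}}_{\bar{V}_{t-1}^{-1}}^{2})$. Taking logarithms and summing over $t = 1, \dots, T$ telescopes the right-hand side to $\log(\det(\bar{V}_{T})/\det(\bar{V}_{0}))$, so the task reduces to comparing $\sum_{t}\norm{X_{t}}_{\bar{V}_{t-1}^{-1}}^{2}$ with $\sum_{t}\log(1 + \norm{X_{t}}_{\bar{V}_{t-1}^{-1}}^{2})$.

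For that comparison I would invoke the elementary scalar bound $x \le 2\log(1+x)$, which holds for all $x$ in an interval around the origin comfortably containing $[0,2]$. To apply it term by term, I must check that $\norm{X_{t}}_{\bar{V}_{t-1}^{-1}}^{2}$ lies in this range: since $\bar{V}_{t-1} \succeq \lambda I$ and $\norm{X_{t}}_{2} \le 1$, we get $\norm{X_{t}}_{\bar{V}_{t-1}^{-1}}^{2} \le \norm{X_{t}}_{2}^{2}/\lambda \le 1/\lambda \le 2$, using precisely the hypothesis $\lambda \ge 1/2$. This is the only place the condition $\lambda \ge 1/2$ enters, and pinning down the exact admissible range of the scalar inequality is the one mildly delicate point; everything else is bookkeeping. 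Combining this with the telescoping identity gives $\sum_{t=1}^{T}\norm{X_{t}}_{\bar{V}_{t-1}^{-1}}^{2} \le 2\log(\det(\bar{V}_{T})/\det(\bar{V}_{0}))$, the first claimed inequality.

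For the second inequality I would pass from the determinant to the trace by AM--GM on the eigenvalues of the positive definite matrix $\bar{V}_{T}$, namely $\det(\bar{V}_{T}) \le (\Tr(\bar{V}_{T})/d)^{d}$, together with $\Tr(\bar{V}_{T}) = d\lambda + \sum_{t=1}^{T}\norm{X_{t}}_{2}^{2} \le d\lambda + T$ and $\det(\bar{V}_{0}) = \det(\lambda I) = \lambda^{d}$. Dividing and taking logarithms yields $\log(\det(\bar{V}_{T})/\det(\bar{V}_{0})) \le d\log((d\lambda + T)/(d\lambda))$, which is the trace--determinant estimate used elsewhere in the paper (Lemma~\ref{lemma: trace-determinant}); chaining the two displays completes the proof. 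The only real obstacle is making the scalar inequality rigorous over the range forced by $\lambda \ge 1/2$; the determinant manipulations and the AM--GM step are entirely routine.
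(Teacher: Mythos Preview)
Your proposal is correct and follows essentially the same route as the paper: the matrix determinant lemma gives the telescoping identity, the scalar bound $x \le 2\log(1+x)$ on $[0,2]$ (justified via $\norm{X_t}_{\bar V_{t-1}^{-1}}^2 \le 1/\lambda \le 2$ from $\lambda \ge 1/2$) yields the first inequality, and AM--GM on the eigenvalues of $\bar V_T$ together with the trace bound gives the second. The only cosmetic difference is the order of presentation; the paper applies the scalar inequality first and then telescopes, whereas you do the reverse.
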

\begin{proof}
For completeness, we provided a detailed proof of the elliptical potential lemma using the arguments of \citet{Abbasi-Yadkori2011}. However, we note that this is not the only way to obtain the stated result. \citet{Carpentier2020} prove the lemma using insights from linear algebra.

Firstly, notice that: 
\begin{equation*}
    \norm{x}_{\bar{V}_{t}}^{2} 
    = \lambda \norm{x}_{2}^{2} + \sum_{s = 1}^{t}\left(x^T X_{s}\right)\left(X_{s}^{T} x\right) = \lambda \norm{x}_{2}^{2} + \sum_{s = 1}^{t} \norm{X_{s}^{T} x}_{2}^{2} \geq \lambda \norm{x}_{2}^{2} > 0 \implies \norm{x}_{\bar{V}_{t}^{-1}}^{2} > 0
\end{equation*}
Additionally, $\lambda \geq 1/2$ and $\norm{X_{t}}_{2}^{2} \leq 1$. Therefore,
$$
\norm{x}_{\bar{V}_{t}^{-1}}^{2} \leq \norm{x}_{\bar{V}_{0}^{-1}}^{2} = \frac{1}{\lambda}\norm{x}_{2}^{2} \leq \frac{1}{\lambda}
$$
Consequently, 
$$0 < \norm{X_{t}}_{\bar{V}_{t}^{-1}}^{2} \leq \frac{1}{\lambda} \leq 2$$

Since $x < 2\ln(1 + x)$ for any $0 < x \leq 2$, we have that:
\begin{equation}\label{equation: ep start}
    \sum_{t = 1}^{T} \norm{X_{t}}_{\bar{V}_{t}^{-1}}^{2} \leq 2\sum_{t = 1}^{T}\log\left(1 + \norm{X_{t}}_{\bar{V}_{t}^{-1}}^{2}\right) = 2\log\left(\,\prod_{t = 1}^{T}\left(1 + \norm{X_{t}}_{\bar{V}_{t}^{-1}}^{2}\right)\right)
\end{equation}

Now, proving the first inequality amounts to relating the term inside the logarithm to the determinants of the matrices. By Definition, we have that: 
$$
\bar{V}_{t} = \bar{V}_{t - 1} + X_{t}X_{t}^{T} = \bar{V}_{t - 1}^{1/2}\left(I + \bar{V}_{t - 1}^{-1/2} X_{t}X_{t}^{T}\bar{V}_{t - 1}^{-1/2}\right)\bar{V}_{t - 1}^{1/2}
$$
and
\begin{align*}
    \det\left(\bar{V}_{n}\right) &= \det \left(\bar{V}_{n - 1}^{1/2}\left(I + \bar{V}_{n - 1}^{-1/2} X_{n}X_{n}^{T}\bar{V}_{n - 1}^{-1/2}\right)\bar{V}_{n - 1}^{1/2}\right)\\
    &= \det\left(\bar{V}_{n - 1}\right)\det\left(I + \bar{V}_{n - 1}^{-1/2} X_{n}X_{n}^{T}\bar{V}_{n - 1}^{-1/2}\right)\tag{Properties of Determinants}\\
    &= \det\left(\bar{V}_{n - 1}\right)\left(1 + X_{n}^{T}\bar{V}_{n - 1}^{-1} X_{n}\right)\tag{Matrix Determinant Lemma}\\
    &= \det\left(\bar{V}_{n - 1}\right)\left(1 + \norm{X_{n}}_{\bar{V}_{n - 1}^{-1}}^{2}\right)\tag{By Positive Definiteness}\\
    &= \det\left(V_{0}\right)\prod_{t = 1}^{n}\left(1 + \norm{X_{t}}_{\bar{V}_{t - 1}^{-1}}^{2}\right)
\end{align*}
Rearranging and plugging this into \eqref{equation: ep start} gives: 
\begin{equation*}
    \sum_{t = 1}^{T} \norm{X_{t}}_{\bar{V}_{t}^{-1}}^{2} \leq 2\log\left(\,\prod_{t = 1}^{T}\left(1 + \norm{X_{t}}_{\bar{V}_{t}^{-1}}^{2}\right)\right) = 2\log\left(\frac{\det\left(V_{T}\right)}{\det\left(V_{0}\right)}\right) 
\end{equation*}
proving the first inequality. Lemma \ref{lemma: trace-determinant} proves the second inequality, completing the proof. 
\end{proof}
 




\begin{lemma}\label{lemma: trace-determinant}
Let $\{X_{t}\}_{t = 1}^{\infty}$ be an arbitrary sequence of $d$-dimensional vectors such that $\norm{X_{t}}_{2}^{2} \leq 1$. Define $V_{0} = \lambda I$, $V_{t} = \sum_{s = 1}^{t} X_{s}X_{s}^T$ and $\bar{V}_{t} = V_{0} + V_{t}$. Then,
$$
2\log\left(\frac{\det\left(\bar{V}_{T}\right)}{\det\left(\bar{V}_{0}\right)}\right)\leq  2d\log\left(\frac{d\lambda + T}{d\lambda}\right)
$$
for $\lambda \geq 1/2$.
\end{lemma}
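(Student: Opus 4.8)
\section*{Proof proposal for Lemma \ref{lemma: trace-determinant}}

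The plan is to control the determinant $\det(\bar{V}_T)$ by the trace of $\bar{V}_T$ via the AM--GM inequality applied to the eigenvalues, and then bound the trace directly using the assumption $\norm{X_t}_2^2 \leq 1$. Concretely, since $\bar{V}_T = \lambda I + \sum_{s=1}^T X_s X_s^T$ is symmetric positive definite, its eigenvalues $\lambda_1(\bar{V}_T), \dots, \lambda_d(\bar{V}_T)$ are all strictly positive, $\det(\bar{V}_T) = \prod_{i=1}^d \lambda_i(\bar{V}_T)$, and $\Tr(\bar{V}_T) = \sum_{i=1}^d \lambda_i(\bar{V}_T)$. The AM--GM inequality then gives
$$
\det(\bar{V}_T) = \prod_{i=1}^d \lambda_i(\bar{V}_T) \leq \left(\frac{1}{d}\sum_{i=1}^d \lambda_i(\bar{V}_T)\right)^d = \left(\frac{\Tr(\bar{V}_T)}{d}\right)^d .
$$

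Next I would bound the trace. By linearity of the trace and the cyclic property, $\Tr(X_s X_s^T) = \Tr(X_s^T X_s) = \norm{X_s}_2^2 \leq 1$, so
$$
\Tr(\bar{V}_T) = \Tr(\lambda I) + \sum_{s=1}^T \Tr(X_s X_s^T) \leq d\lambda + T .
$$
Combining with the previous display, $\det(\bar{V}_T) \leq \left((d\lambda + T)/d\right)^d$. Since $\det(\bar{V}_0) = \det(\lambda I) = \lambda^d$, this yields
$$
\frac{\det(\bar{V}_T)}{\det(\bar{V}_0)} \leq \frac{(d\lambda + T)^d}{d^d \lambda^d} = \left(\frac{d\lambda + T}{d\lambda}\right)^d .
$$
Taking logarithms and multiplying by $2$ gives the claimed bound $2\log(\det(\bar{V}_T)/\det(\bar{V}_0)) \leq 2d\log((d\lambda+T)/(d\lambda))$.

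This argument is essentially routine; there is no serious obstacle. The only points requiring a little care are: (i) confirming positive-definiteness so that the determinant really equals the product of (positive) eigenvalues and AM--GM applies (here $\lambda \geq 1/2 > 0$ suffices, so the hypothesis on $\lambda$ is used only to guarantee $\bar{V}_0 \succ 0$ and to match the statement of Lemma \ref{lemma: elliptical potential}); and (ii) keeping the normalisation $\norm{X_t}_2 \le 1$ straight when bounding the trace. No appeal to the matrix determinant lemma or to the telescoping determinant identity from Lemma \ref{lemma: elliptical potential} is needed for this particular inequality.
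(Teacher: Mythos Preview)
Your proof is correct and follows essentially the same route as the paper: apply AM--GM to the eigenvalues of $\bar{V}_T$ to bound the determinant by $(\Tr(\bar{V}_T)/d)^d$, then bound the trace by $d\lambda + T$ using $\norm{X_t}_2^2 \leq 1$. The only minor remark is that positive-definiteness requires merely $\lambda > 0$, so the hypothesis $\lambda \geq 1/2$ is indeed not actually used in this lemma (it is inherited from Lemma~\ref{lemma: elliptical potential}).
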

\begin{proof}
Let $\bar{V}_{T}$ have eigenvalues $\lambda_{1} \geq \lambda_{2} \geq \cdots \geq \lambda_{d}$. Then, 
\begin{align*}
    2\log\left(\frac{\det\left(V_{T}\right)}{\det\left(V_{0}\right)}\right)  &= 2\log\left(\frac{\prod_{i = 1}^{d}\lambda_{i}}{\lambda^{d}}\right)\\
    &\leq 2\log\left(\frac{\left(\frac{1}{d}\sum_{i = 1}^{d}\lambda_{i}\right)^d}{\lambda^{d}}\right) = 2\log\left(\left(\frac{\sum_{i = 1}^{d}\lambda_{i}}{d \lambda}\right)^{d}\right)\tag{AM-GM Inequality}\\
    &= 2\log\left(\left(\frac{\Tr\left(\bar{V}_{T}\right)}{d \lambda}\right)^{d}\right) = 2\log\left(\left(\frac{\Tr\left(V_{0} + \sum_{t = 1}^{T}X_{t}X_{t}^T\right)}{d \lambda}\right)^{d}\right)\\
    &= 2\log\left(\left(\frac{\Tr\left(V_{0}\right) + \sum_{t = 1}^{T}\Tr\left(X_{t}X_{t}^T\right)}{d \lambda}\right)^{d}\right) = 2\log\left(\left(\frac{d\lambda + \sum_{t = 1}^{T}\Tr\left(X_{t}^T X_{t}\right)}{d \lambda}\right)^{d}\right)\\
    &= 2\log\left(\left(\frac{d\lambda  + \sum_{t = 1}^{T}\norm{X_{t}}_{2}^{2}}{d \lambda}\right)^{d}\right)\\
    &\leq 2\log\left(\left(\frac{d\lambda  + T}{d \lambda}\right)^{d}\right)\\
    &= 2d\log\left(\frac{d\lambda  + T}{d \lambda}\right)
\end{align*}
which completes the proof.
\end{proof}

\section{Additional Experimental Results}\label{sec: additional experiments}
Here, we present additional experimental results for the linear and logistic bandits under delayed feedback with delays drawn from the uniform and Pareto distributions. 

\subsection{Uniform Delays}
\begin{figure*}[h!]
    \centering
    \includegraphics[width = \textwidth]{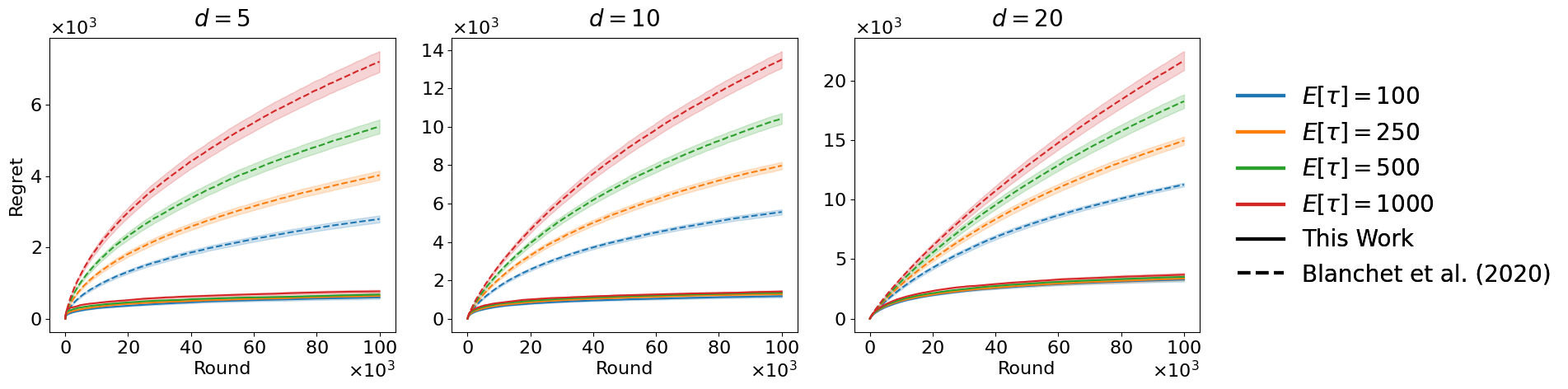}
    \caption{Linear Bandit \& Uniform Delays.}
\end{figure*}

\begin{figure*}[h!]
    \centering
    \includegraphics[width = \textwidth]{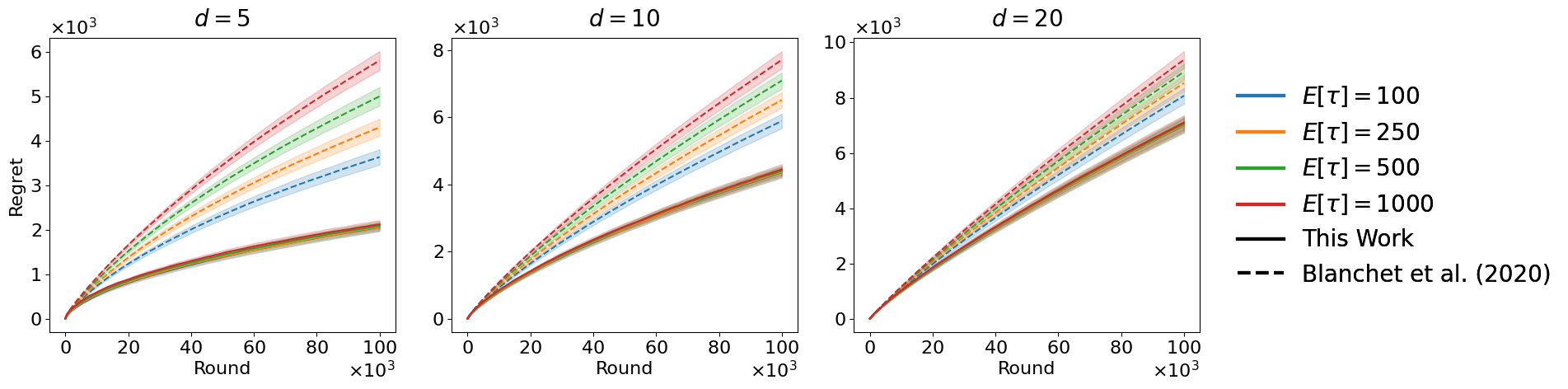}
    \caption{Logistic Bandit \& Uniform Delays.}
\end{figure*}

\newpage
\subsection{Pareto Delays}
\begin{figure*}[h!]
    \centering
    \includegraphics[width = \textwidth]{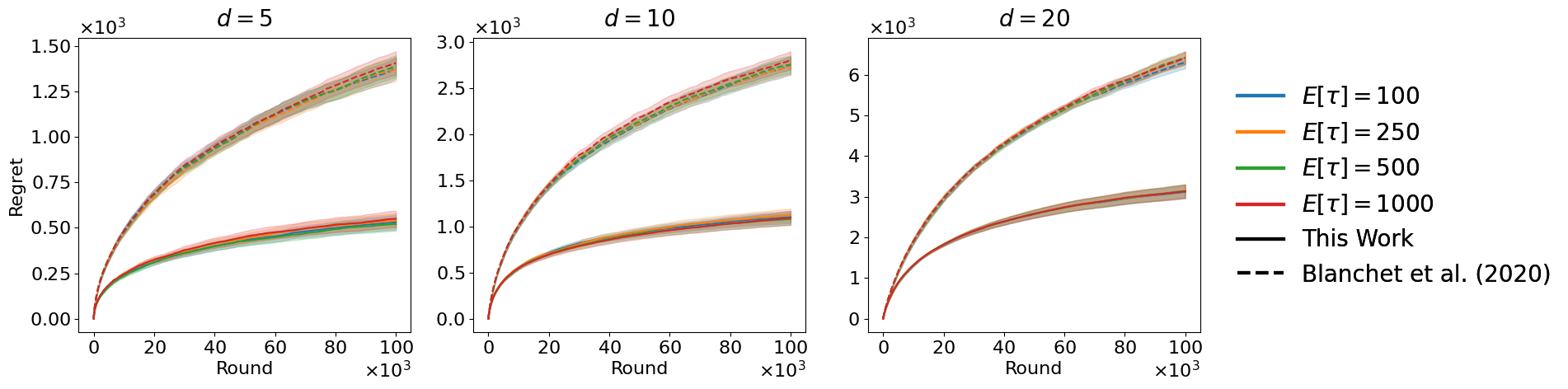}
    \caption{Linear Bandit \& Pareto Delays.}
    \label{figure: linear bandit pareto}
\end{figure*}

\begin{figure*}[ht!]
    \centering
    \includegraphics[width = \textwidth]{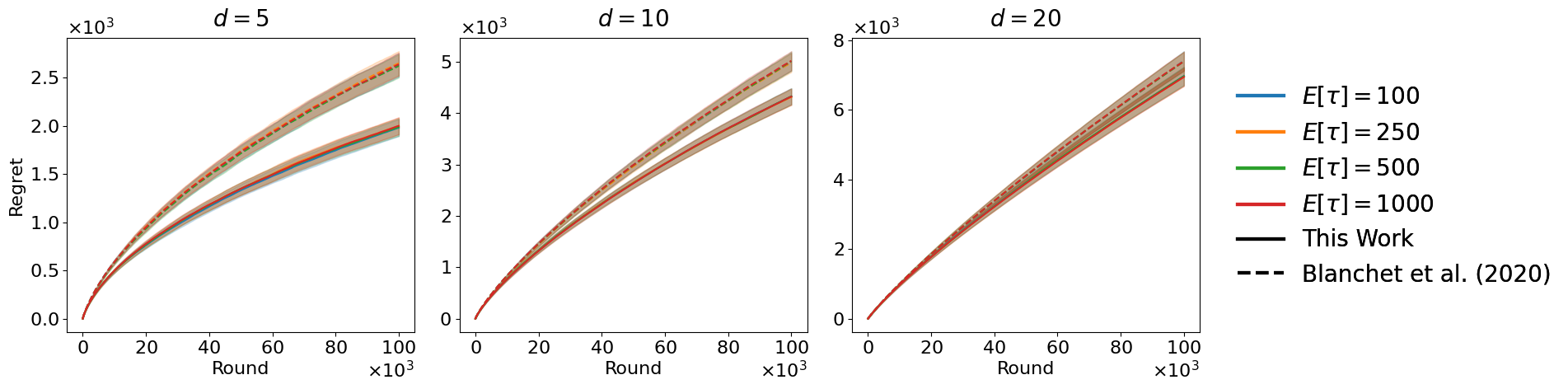}
    \caption{Logistic Bandit \& Pareto Delays}
    \label{figure: logistic bandit pareto}
\end{figure*}
\end{appendix}

\end{document}